\PassOptionsToPackage{capitalize,noabbrev}{cleveref}
\PassOptionsToPackage{cleveref}{jmlrutils}
\documentclass{alt2027}
\usepackage[T1]{fontenc}
\usepackage{times}
\usepackage{mathtools}
\usepackage{booktabs}
\usepackage{microtype}
\usepackage{xurl}
\hypersetup{
  pdftitle={Feature Normalization under Finite Training Budgets},
  pdfauthor={Johan Larsson}
}

\counterwithin{theorem}{section}
\aliascntresetthe{lemma}
\aliascntresetthe{proposition}
\aliascntresetthe{corollary}
\DeclareMathOperator{\diag}{diag}

\DeclareMathOperator{\E}{E}
\newcommand{\Prob}{\Pr}

\title{Feature Normalization under Finite Training Budgets}
\altauthor{
  \Name{Johan Larsson} \Email{johan@jolars.co}\\ \addr Department of
  Mathematical Sciences, University of Copenhagen, Universitetsparken 5, 2100
  Copenhagen, Denmark}

\begin{document}
\maketitle
\thispagestyle{plain}

\begin{abstract}
  Standardizing a rare feature can help stochastic gradient descent learn its
  effect within a finite budget, but it also changes the noise cost of learning.
  We characterize the lowest rare-subgroup risks attainable by standardized and
  unscaled training when each chooses a nonnegative constant learning rate and a
  deterministic stopping time under the same observation cap. For two
  independent binary features, we derive both optimized risks as prevalence
  vanishes while the expected number of rare observations remains finite.
  Uniform control over rates and stopping times reduces each limiting oracle to
  a scalar minimization. With positive noise, the preferred scaling changes
  exactly once as we move signal to the rare feature while the limiting risk at
  initialization stays fixed and positive. The formulas also explain how noise
  and budget affect the comparison. Separate finite-prevalence bounds favor
  standardization on a rare-dominant signal family and unscaled training on a
  common-dominant family. The results concern final iterates with population
  preprocessing and fresh observations. Exact-moment searches assess the
  approximation at finite prevalence. In a controlled study, both orderings
  persist under observation reuse, estimated preprocessing, and intercept
  fitting when tuning choices are fixed before evaluation. Selecting a predictor
  after observing training can reverse an ordering, even when selection uses
  exact population risk.
\end{abstract}

\begin{keywords}
  Feature normalization, stochastic gradient descent, early stopping,
  rare-subgroup risk
\end{keywords}

\section{Introduction}

A rare feature gives a model few opportunities to learn its effect. Within a
finite training budget, stochastic gradient descent may learn the effects of
common features while making little progress on the rare one. Standardizing the
rare feature gives those few observations more influence on its coefficient
updates. This can help recover the rare effect, but it also makes the fitted
coefficient more sensitive to noise.

These effects can substantially change prediction where the rare feature is
present, despite contributing little to average error. Normalization therefore
raises a statistical question: how strongly should training respond to scarce
evidence? For instance, scaling may leave a solution unchanged but changes which
signals we learn before stopping. \Cref{fig:intro-coefficient-paths} illustrates
this difference on a fixed least-squares problem: the methods approach the same
solution along different paths and reach different coefficients when stopped
early.

Adjusting the learning rate and stopping time is a natural way to compensate for
different learning speeds. We ask how low each scaling can drive rare-subgroup
risk when both optimize these choices under the same observation cap. In our
two-feature model, neither scaling is uniformly preferable.
\Cref{fig:intro-signal-crossing} illustrates the comparison: standardization
gives lower risk when the rare signal dominates, whereas unscaled training gives
lower risk when the common signal dominates.

The constraint is that a single learning rate controls both coefficient updates.
Relative feature scaling fixes the ratio of their steps, which tuning cannot
change. To learn a dominant rare signal, unscaled training also introduces noise
in the common coefficient. To learn a dominant common signal, standardization
introduces noise in the rare coefficient. A smaller rate reduces noise but slows
learning, and the observation cap limits how far a longer run can compensate.
The preferred scaling therefore depends on which signal is worth learning and
the noise cost of learning it.

\begin{figure}[htb]
  \centering
  \includegraphics{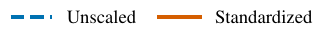}\par\smallskip
  \subfigure[Full-gradient descent on a noiseless dataset with binary features at prevalences $0.5$ and $0.02$. Dots mark an illustrative stop after 20 updates. Pale curves show progress toward the shared OLS solution,
    while gray contours show training loss.]{%
    \label{fig:intro-coefficient-paths}%
    \includegraphics{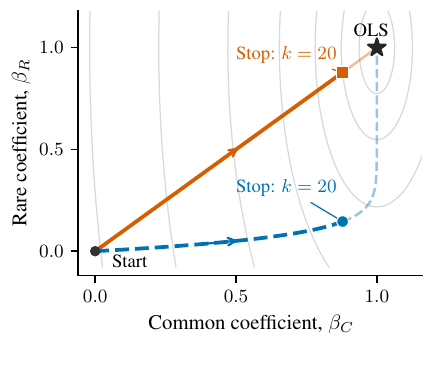}%
  }\hfill
  \subfigure[Limiting oracle risks for SGD with noise variance one and rescaled budget $\tau = 2$. As the rare signal share $\theta$ increases,
    $(\beta_C^*)^2 = 4(1 - \theta)$ and $(\beta_R^*)^2 = \theta$,
    fixing initial limiting subgroup risk at $1/2$. Lines connect saved scalar minima along the allocation path in \cref{cor:oracle-allocation}. \Cref{sec:signal-allocation} gives the setup and numerical checks.]{%
    \label{fig:intro-signal-crossing}%
    \includegraphics{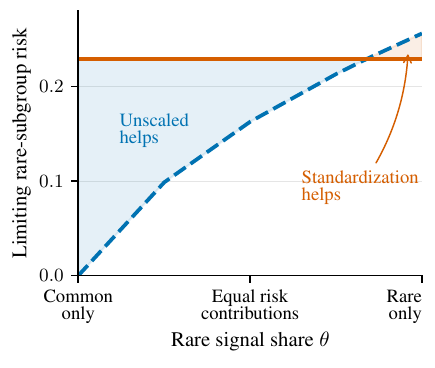}%
  }
  \caption{
    Normalization changes learning paths, and its effect on risk depends on the
    signal.
  }
\end{figure}

\begin{figure}[t]
  \centering
  \includegraphics{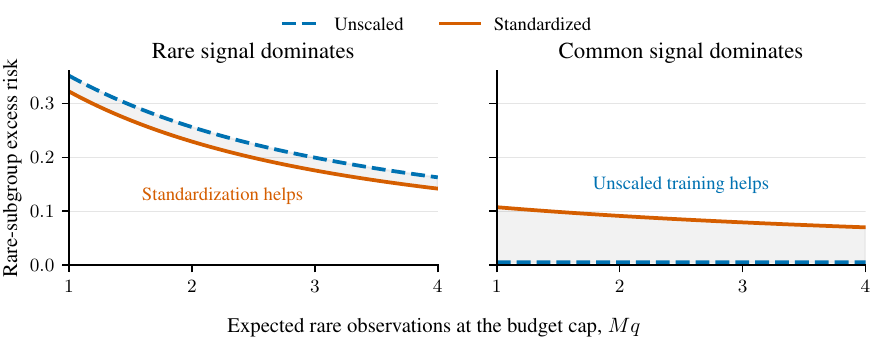}\par\smallskip
  \caption{
    Tuning does not erase the effect of normalization. Independent binary
    features have prevalences $1/2$ and $q = 10^{-4}$, with
    $(\beta_C^*,\beta_R^*) = (0.1,1)$ (left) or $(1,0.1)$ (right) and noise
    variance one. Under the stated population setup, curves show the lowest
    exact rare risk over a finite rate grid and every integer stop within each
    cap, including zero. Both panels use the same axes, and lower risk is
    better. This illustrates \cref{thm:conference-predictive}, which covers all
    nonnegative constant rates. See \cref{sec:predictive-intro-figure} for
    numerical details.
  }
  \label{fig:conference-tuned-intro}
\end{figure}

We study two independent binary predictors under squared loss, with known
population means and standard deviations, zero initialization, a fixed zero
intercept, and one fresh observation per update. Each method chooses a
nonnegative constant learning rate and a deterministic stopping time to minimize
the final iterate's rare-subgroup risk averaged over training randomness. We
call the resulting optimized risk its \emph{oracle risk}.

Our main result, \cref{thm:oracle-limits}, characterizes both oracle risks when
rare observations remain scarce even as the total observation cap grows. We let
prevalence $q$ decrease and increase the cap $M$ so that
$Mq(1 - q) \to \tau \in (0,\infty)$. The expected number of rare observations at
the cap then tends to $\tau$. For any fixed signal strengths and noise level,
each limiting oracle risk is given by minimizing an explicit function of one
variable. The characterization remains valid when the learning rate and stopping
time vary with prevalence.

We use these formulas to determine how the preferred scaling changes as signal
moves from the common feature to the rare feature. Along this path, we keep the
limiting rare-subgroup risk before training fixed and positive. With positive
noise and $\tau$ held fixed, the preferred scaling switches exactly once. The
formulas also show that, for fixed signal strengths, varying the noise level or
budget can change whether standardized or unscaled training has lower limiting
oracle risk.

The finite-prevalence comparison in \cref{thm:conference-predictive} complements
this characterization with explicit positive lower bounds on the difference
between the two oracle risks for two signal families. Standardization has lower
risk in the family with a dominant rare signal, and unscaled training has lower
risk in the family with a dominant common signal. The bounds hold whenever
$0 < q \leq 0.04$ and $1 \leq Mq(1 - q) \leq 4$.
\Cref{fig:conference-tuned-intro} illustrates these risk differences across
observation caps. As prevalence decreases, the budget grows while the expected
number of rare occurrences remains roughly between one and four. The risk
differences therefore persist even as the total observation cap grows.

We compute expected squared coefficient errors exactly to distinguish incomplete
learning from variability caused by feature sampling and observation noise.
These calculations also let us compare the limiting predictions with optimized
risks at finite prevalence. With several common features, we can still track
their combined error and the rare coefficient's error using two scalar
recurrences. Numerical searches through 64 common features continue to favor
standardization when the rare signal dominates and unscaled training when the
common signal dominates.

A controlled study then asks how these comparisons change when training reuses
observations, estimates preprocessing, or fits an intercept. We retain the two
illustrative signal pairs and find that both orderings persist when each
method's tuning choices are fixed before evaluation. We then allow selection
among the fitted predictors in each training run. This changes the objective:
the best predictor in a particular run can differ from the predictor trained
with the settings that work best on average. The two scalings can benefit by
different amounts, enough to reverse an ordering even when selection uses exact
population risk. We also examine how missing rare validation observations affect
selection.

\paragraph{Relation to prior work.}

\citet[Section 5.4]{bordes2010erratum} warn that standardizing very rare
features can hurt test performance. In a study of deterministic graph neural
network training, \citet[Section 5]{milligan2024normalization} identify
increased stochastic variance as a possible cost of standardizing rare binary
features. These observations motivate a comparison of the best risks attainable
under each scaling after tuning.

Scaling affects the regularization induced by finite training. Preconditioning
can change early-stopped predictions even when the least-squares optimum is
unchanged~\citep{wadia2021whitening}. Early stopping itself regularizes least
squares~\citep{ali2019early}, and \citet{ali2020stochastic} compare stochastic
gradient flow with ridge along the training path for a fixed design. For
explicit regularization, \citet{larsson2025normalization} analyze how feature
scaling affects shrinkage. Our question concerns scaling's effect when the rate
and stopping time are chosen under a finite observation cap.

Earlier comparisons after tuning show why the signal matters. In an
overparameterized gradient-flow model,
\citet[Proposition 7]{amari2020preconditioning} compare aggregate bias after
optimal early stopping for particular signal covariances.
\citet[Theorem 4.2]{su2024preconditioning} bound aggregate excess risk for
fresh-sample SGD with tail averaging and a restricted constant rate, then
compare preconditioned SGD with ridge regression. We compare the two scaling
rules using final iterates, optimizing over all nonnegative constant rates and
deterministic stops under a common cap.

Our focus on a rare subgroup connects the comparison to target-distribution risk
and covariate shift. We condition the test distribution on the rare feature's
presence while retaining the training distribution. \citet{lei2021shift} study
minimax linear estimation under target-weighted loss. \citet{wu2022pretraining}
and \citet[Theorem 4]{liu2025precondition} analyze SGD variants with decaying
rates under different training and test laws.

We use exact discrete-time SGD moments to separate incomplete learning,
feature-sampling variation, and observation noise. This builds on analyses of
least-mean-squares methods that separate bias and variance while accounting for
random sampling~\citep{defossez2015averaged, jain2018parallelizing}. As
prevalence vanishes, the rare feature's standardized fourth moment diverges, as
must the fourth-moment constant in \citet[Assumption 3.2]{su2024preconditioning}
in our model. Risk bounds whose constants depend on this moment need to retain
that dependence to describe our regime. Our exact moments retain it when passing
to optimized risks.

\section{Normalization, learning, and rare-subgroup risk}
\label{sec:conference-setting}

We isolate the effect of prevalence with two independent binary features: a
common feature $I$ with $\Prob(I = 1) = 1/2$ and a rare feature $J$ with
$\Prob(J = 1) = q$, where $0 < q < 1/2$. For the centered vector
$x = (I - 1/2,J - q)^\top$, suppose
\[
  y = x^\top\beta^* + \varepsilon,\qquad \E(\varepsilon \mid x)
    = 0,\qquad \E(\varepsilon^2 \mid x)
    = \sigma^2.
\]
Write $a = 1/4$, $b = q(1 - q)$, and $C = \E(xx^\top) = \diag(a,b)$. We use the
population means, fix the centered intercept at zero, and initialize
$\beta_0 = 0$.

\paragraph{Scaling and the training budget.}

We divide each feature by a positive scale factor, collected in the diagonal
matrix $S = \diag(s_C,s_R)$, and write $P = S^{-2}$. Training on $z = S^{-1}x$
gives the following SGD update on the original coefficient scale:
\begin{equation}
  \beta_{k+1} = \beta_k - \eta Px_k(x_k^\top\beta_k - y_k).
\end{equation}
We use squared loss with a factor $1/2$. Each update uses one independent fresh
observation, so an observation cap $M$ permits at most $M$ updates. We compare
unscaled features (U), with $P_U = I_2$, and standard-deviation scaling (S, or
SD), with $P_S = \diag(a^{-1},b^{-1})$. Both procedures use the final iterate.

We can compensate exactly for global rescaling by adjusting the learning rate.
Replacing $S$ by $gS$ and $\eta$ by $g^2\eta$ leaves every update unchanged when
the methods use the same data stream. Changing the relative feature scales,
however, generally changes $\eta P$ in a way that a single learning rate cannot
undo. Whether this leads to a risk difference after tuning requires a separate
comparison.

\paragraph{Prediction where the rare feature is present.}

Our main objective is excess prediction risk where the rare feature is present,
$J = 1$. For an independent test vector $x$, define
\[
  R_{R,k} = \frac12\E\bigl[ \bigl(x^\top(\beta_k - \beta^*)\bigr)^2 \mid J = 1 \bigr].
\]
The expectation averages over the conditional test distribution and training
randomness, and the risk excludes irreducible test noise. Independence keeps the
common feature centered within the rare subgroup, $\E(x_C \mid J = 1) = 0$, so
the cross term vanishes and
\begin{equation}
  R_{R,k} = \frac18\E\bigl[(\beta_{C,k} - \beta_C^*)^2\bigr]
            + \frac{(1 - q)^2}{2}\E\bigl[(\beta_{R,k} - \beta_R^*)^2\bigr].
\end{equation}
Here the expectations average over training randomness. Aggregate excess risk
$R_k$ replaces $(1 - q)^2$ by $q(1 - q)$ in this expression. Thus aggregate
error can give little weight to an error that matters within the rare subgroup.
We evaluate both objectives on the same trained predictor and retain the
original training distribution.

\paragraph{Learning signal and fitting noise.}

Let $e_k = \E(\beta_k - \beta^*)$ be the mean coefficient error. Under fresh
sampling, $e_{k+1,j} = (1 - \eta C_{jj}/s_j^2)e_{k,j}$. SD scaling therefore
gives every coordinate the multiplier $1 - \eta$. For $0 < \eta \leq 1$, early
stopping gives each coefficient the same shrinkage factor in expectation,
although its uncertainty still depends on feature sampling and observation
noise.

To separate these sources of uncertainty, let
$V_k = \E\bigl[(\beta_k - \beta^*)(\beta_k - \beta^*)^\top\bigr]$. The error
second moment decomposes as
\begin{equation}
  V_k = e_ke_k^\top + A_k + \sigma^2N_k, \label{eq:conference-decomposition}
\end{equation}
where $A_k \succeq 0$ is the covariance of the noiseless process under random
feature sampling and $\sigma^2N_k \succeq 0$ is the covariance contributed by
observation noise. This separates incomplete learning from the two sources of
training variation. \Cref{sec:variance-decomposition} derives the decomposition
and its moment formula.

To expose the tradeoff, temporarily fix $\eta_U = 4h$ and $\eta_S = h$, with
$0 < h \leq 1$. The common coefficient then has the same mean trajectory under
both rules. SD scaling changes the contraction factor for the mean error in the
rare coefficient from $1 - 4hb$ to $1 - h$, while increasing the
observation-noise variance added to that coefficient at each update from
$16h^2b\sigma^2$ to $h^2\sigma^2/b$. It learns the rare signal faster but also
introduces more noise.

\section{Oracle risks and the preferred scaling}

The matched-rate comparison leaves open how much tuning can compensate for these
differences. We therefore let each scaling choose its own constant learning rate
and stopping time under the same observation cap and compare the lowest
attainable rare-subgroup risks. We first establish opposite orderings for two
signal families at finite prevalence. We then characterize both optimized risks
as prevalence vanishes and use the formulas to locate a change in the preferred
scaling.

\paragraph{The oracle comparison.}

We parameterize each method's learning rate by its step on the rare coefficient,
$\alpha$, in the original coefficient scale. The update matrices are
\begin{equation}
  D_U(\alpha) =    \alpha I_2,\qquad D_S(\alpha)
              =    \diag(4b\alpha,\alpha),\quad \alpha
              \geq 0. \label{eq:conference-oracle-rates}
\end{equation}
Here $D = \eta P$, so $\eta_U = \alpha$ and $\eta_S = b\alpha$. Each
parameterization covers every nonnegative constant learning rate for its rule.
Holding $\alpha$ fixed gives both methods the same rare-coordinate mean
trajectory and the same observation-noise variance injected directly into that
coordinate at each update. SD's common-coordinate step is $4b$ times U's, so it
can reduce the common-coordinate observation noise introduced while learning the
rare effect.

Each oracle chooses $\alpha$ separately. For an observation cap
$M \in \mathbb N$, define
\begin{equation}
  R_A^*(M)
  =   \inf_{\substack{\alpha\geq0\\ k\in\{0,\ldots,M\}}} R_{R,k}^{A}(\alpha),\qquad A
  \in \{U,S\}.
\end{equation}
The oracle chooses a deterministic rate and stop using population risk averaged
over training randomness. It may stop at the prescribed zero initialization.
Because we evaluate finite stops, the search includes rates that need not be
mean-square stable. The cap $M$ applies to each trajectory; the infimum
describes attainable risk without charging for the search needed to find a rate
and stop. It does not compare methods with equal total computation for tuning.

\subsection{Quantitative gaps at finite prevalence}

To isolate the competing noise costs, we compare two signal families. In one,
the rare feature carries the dominant signal; in the other, the common feature
does. The dominant coefficient has magnitude equal to the noise standard
deviation, while the other has magnitude at most one tenth as large.

We restrict the finite-prevalence comparison to $1 \leq Mq(1 - q) \leq 4$,
corresponding to roughly one to four expected rare observations. The total cap
can therefore grow as prevalence decreases while evidence about the rare feature
remains scarce. On this interval, we certify uniform positive gaps for both
signal families. The endpoints specify sufficient bounds for these guarantees;
they need not mark a change in the preferred scaling or a threshold for learning
rare features.

The limiting characterization in \cref{thm:oracle-limits} covers every positive
rescaled budget. With positive noise and a nonzero rare signal, sufficiently
large budgets favor standardization in that limit even when the common signal
dominates (\cref{sec:oracle-noise-budget}). The following theorem establishes a
positive risk gap in each direction throughout the stated finite-prevalence
regime: standardization wins for the rare-dominant family, and unscaled training
wins for the common-dominant family, even after each method chooses its best
constant rate and deterministic stop.

\begin{theorem}[Opposite oracle rare-risk orderings]
  \label{thm:conference-predictive}
  Suppose $\sigma^2 > 0$, $0 < q \leq 0.04$, and
  $1 \leq \tau = Mq(1 - q) \leq 4$ in the independent binary model above.
  \begin{enumerate}
    \item If $(\beta_R^*)^2 = \sigma^2$ and $(\beta_C^*)^2 \leq \sigma^2/100$,
          then $R_U^*(M) - R_S^*(M) \geq \sigma^2/1000$.
    \item If $(\beta_C^*)^2 = \sigma^2$ and $(\beta_R^*)^2 \leq \sigma^2/100$,
          then $R_S^*(M) - R_U^*(M) \geq \sigma^2/50$.
  \end{enumerate}
\end{theorem}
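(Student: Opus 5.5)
Both parts need a lower bound on one oracle risk and an upper bound on the other. The tool is the exact second-moment recursion behind \eqref{eq:conference-decomposition}. The update matrix is diagonal and the features are independent and centered, so the diagonal entries of $V_k$ satisfy a closed two-dimensional linear recursion. Write $v_k=(\E(\beta_{C,k}-\beta_C^*)^2,\ \E(\beta_{R,k}-\beta_R^*)^2)$ and $D=\diag(d_C,d_R)$. Then
\[
v_{k+1}=T(D)v_k+\sigma^2\bigl(d_C^2a,\ d_R^2b\bigr)^\top ,
\]
where $T(D)$ has diagonal entries $(1-d_Ca)^2+d_C^2(\E x_C^4-a^2)$ and $(1-d_Rb)^2+d_R^2(\E x_R^4-b^2)$. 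Its off-diagonal entries are $d_C^2ab$ and $d_R^2ab$, coming from the cross terms $x_C^2x_R^2$. We have $\E x_C^4=1/16$, so $\E x_C^4-a^2=0$, and $\E x_R^4=b(1-3b)$. The off-diagonal entries are nonnegative, so $T$ is entrywise nonnegative. Every $v_k$ is therefore bounded below by the decoupled scalar recursions obtained by dropping the coupling. Upper bounds can be obtained by evaluating the exact recursion at explicit choices of $(\alpha,k)$.

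\textbf{Upper bounds (the winning method).} In part 1, for S with $D_S(\alpha)=\diag(4b\alpha,\alpha)$, I would choose $\alpha=c/b$ with $c$ fixed and stop at $k=M$. The rare coordinate then has one-step factor $(1-c)^2+c^2(1-3b)/b-\dots$. Per update, and writing $m=b$, it contracts with rate roughly $1-2cb+c^2b\cdot(1/b^2)\,b=1-(2c-c^2)\cdot\ldots$. The cleaner route is to note that per observation the rare coordinate behaves like a Bernoulli($b$)-thinned sequence of full steps of size $c$. Over $M$ updates the contraction is about $\exp(-\tau(1-(1-c)^2))$, and the injected noise is about $\tau c^2\sigma^2$. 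The common coordinate uses step $4bc/b=4c\cdot$(scale $a$), so its noise injection is $O(b)$ per step and $O(\tau b)$ in total, which is negligible for $q\le 0.04$. I would then bound the resulting one-dimensional expression for a single good $c$, say $c\approx 0.5$. In part 2, for U, I would take $\alpha$ of order $1/M$ times a constant so the common coefficient is learned, i.e.\ $\alpha aM\approx$ const. The rare coordinate's noise injection is then $\alpha^2 b M\sigma^2=O(b)$, which is tiny, and the common coordinate reaches error close to its optimized one-dimensional early-stopping value.

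\textbf{Lower bounds (the losing method).} Take part 1 for U. Here the rare coordinate moves only through $\alpha$, and the common coordinate then receives noise $\alpha^2 a\sigma^2$ per step. Using the decoupled lower bound, $R_U\ge \frac18 v_C+\frac{(1-q)^2}{2}v_R$. The quantity $v_R$ is at least its one-dimensional minimum over $(\alpha,k\le M)$. The common term adds at least $\frac18\min_k\bigl[\ldots\bigr]$, which is of order $\alpha^2 a\sigma^2k$ when $\alpha a$ is small. The case split would be: if $\alpha\lesssim 1/(bM)$-scale, the rare mean error stays large; if $\alpha$ is larger, the common noise penalty is $\gtrsim \alpha^2 k\sigma^2/4$ and must be paid. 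Minimizing the resulting explicit two-variable lower bound over $\alpha\ge0$, and over $k$ relaxed to a continuous $t=kb\in[0,\tau]$, gives a bound depending only on $\tau$ and $q$. Part 2 for S is symmetric. To learn the common signal, S must take $4b\alpha\cdot aM=\Omega(1)$, i.e.\ $\alpha\gtrsim 1/(bM)=1/\tau$. That forces a rare-coordinate noise injection of about $\alpha^2 b k\sigma^2\gtrsim$ const$\cdot\sigma^2$, weighted by $(1-q)^2/2$. Otherwise the common error $\frac18\sigma^2$ mostly persists.

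\textbf{Main obstacle.} The hard part is making the finite-$q$ constants uniform over $q\in(0,0.04]$ and $\tau\in[1,4]$, together with the unbounded rate range, since rates need not be stable. I would first use monotonicity to reduce each risk to functions of $(\tau,q)$ with explicit $O(q)$ error terms. For instance, $(1-x)^k\le e^{-xk}$ and $(1-x)^k\ge e^{-xk/(1-x)}$ on the relevant range, and large $\alpha$ is disposed of by showing the risk exceeds the trivial stop-at-zero value, which the winner already beats. The remaining compact problem in $(\tau,\text{scaled }\alpha,t)$ I would certify by interval arithmetic on a grid with Lipschitz bounds, checking the margins $\sigma^2/1000$ and $\sigma^2/50$ at worst-case $q=0.04$ and with the cross-signal term $(\beta^*)^2\le\sigma^2/100$ placed adversarially.
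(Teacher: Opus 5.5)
Your skeleton is the paper's: the exact nonnegative recursion for the diagonal moments, decoupled lower bounds for the losing method valid at every rate (with large rates handled analytically), an explicit full-cap candidate for the winner, exponential sandwiches for powers, and a rigorous interval certificate. As written, however, the plan for making this uniform over $q$ would fail.

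\textbf{Uniformity in $q$.} In part 1, the loser's lower bound and the winner's upper bound both decrease as $q$ grows, largely through $w=(1-q)^2$. They move by far more than the margin. With $\sigma^2=1$ and $\tau=1$, the decoupled U lower bound falls from about $0.335$ as $q\to0$ to about $0.302$ at $q=0.04$. The bound for the S candidate $\alpha_S=2/(\tau+4)$, $k=M$, falls from about $0.323$ to about $0.299$. That leaves a margin of roughly $0.004$.

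\emph{Consequences for your plan.} You cannot replace each risk by its limit plus a separate explicit $O(q)$ error. You also cannot take each side at its own worst $q$. Checking only at $q=0.04$ presupposes an unproved monotonicity of the difference.

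\emph{What the paper does.} It makes $q$ a certified coordinate. On each box $[a_0,a_1]\times[q_0,q_1]\times[t_0,t_1]$, it evaluates each bound at the corner unfavorable to it (in part 1, the loser's at $q_1$ and the winner's at $q_0$), then bisects until the margin holds.

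\emph{The winner's side.} Since $M=\tau/b\to\infty$, you cannot certify it by evaluating the recursion numerically. You need closed-form supersolutions. Under S, $W_k=2R^S_{R,k}$ satisfies $W_{k+1}\le\rho W_k+b\alpha^2(1-q)\sigma^2$ with $\rho=1-b\alpha\{2-(1-q)\alpha\}$. For U, you need a self-consistent bound on the maximum of the rare moment. Its noise terms are not negligible near $q=0.04$, contrary to your ``tiny'' estimate.

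\textbf{Time scales in the U lower bound.} Under U the common coordinate relaxes in $O(1/\alpha)$ updates, a factor of order $1/b$ faster than the rare coordinate. So its noise does not grow like $\alpha^2k\sigma^2/4$; that is false as a lower bound for large $k$. Instead it saturates at $4\alpha\sigma^2/(8-\alpha)$ in squared common error.

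With the exact geometric sum, the bound has a boundary layer of width $O(b/\alpha)$ in $t=kb$. Hence Lipschitz bounds in $t$ cannot be uniform as $q\to0$, and your case split on $\alpha$ does not address this.

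The missing step is $A=(1-\alpha/4)^2\le D_b$ on $[0,1]$, which gives $1-A^k\ge1-D_b^k$. This does three things:
\begin{enumerate}
  \item it charges the common noise in proportion to rare progress;
  \item it makes the bound affine in $z=D_b^k$;
  \item it reduces the minimum over $k\le M$ to $z\in\{1,D_b^M\}$ (equivalently, capping the noise coefficient at $w$), which removes the stopping time altogether.
\end{enumerate}
Part 2 uses $D_b\ge(1-b\alpha)^2$ in the same way.

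\textbf{The S candidate.} $\alpha=c/b$ is the wrong scale. It gives $D_b=1-2c-3c^2+c^2/b$, which explodes, and an $O(1)$ common step. Your heuristic actually uses $\alpha=c=\Theta(1)$, i.e.\ $\eta_S=\Theta(b)$. Even then, $c\approx1/2$ fails at $\tau=4$: its limiting S risk is about $0.183$, while U's limiting oracle risk is about $0.163$. So no lower bound can close part 1 there with that candidate. The paper lets the rate depend on the budget, $\alpha_S=2/(\tau+4)$.
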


The same bounds hold when each method restricts its search to mean-square-stable
nonnegative constant rates and initialization. The proof bounds the
disadvantaged method's risk below at every rate and stop and compares it with a
stable candidate for the other method.

The theorem identifies two regions with opposite orderings. To see the tradeoffs
within those regions, we examine the risk components at separately tuned
choices.

\paragraph{The compromise after tuning.}

\Cref{fig:tuned-decomposition} separates the risk components at the choices used
in \cref{fig:conference-tuned-intro} for one observation cap. With a dominant
rare signal, U incurs common-coordinate observation noise, while SD keeps that
contribution small and learns more of the rare signal. With a dominant common
signal, SD accepts substantial incomplete learning of that signal together with
rare-coordinate noise. U instead learns the common signal while leaving the
small rare effect largely unchanged. Thus the disadvantaged method need not have
noise as its largest risk component: tuning can favor leaving signal unlearned
to avoid the noise cost of further progress.

\begin{figure}[t]
  \centering
  \includegraphics{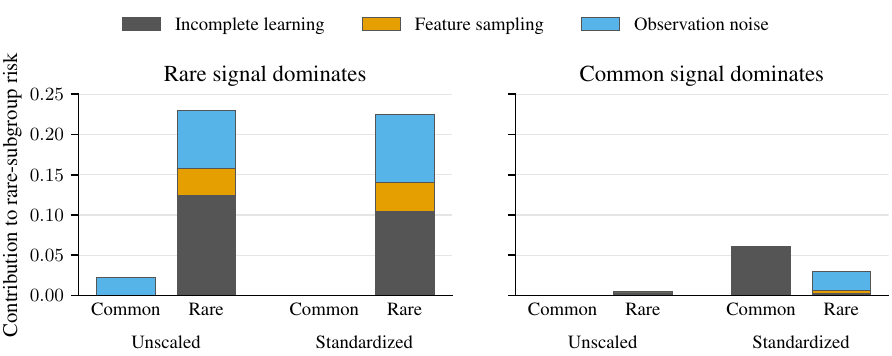}
  \caption{
    Normalization changes the balance between incomplete learning and training
    variability, even after tuning. Panels use $(\beta_C^*,\beta_R^*) = (0.1,1)$
    (left) and $(1,0.1)$ (right), with prevalence $q = 10^{-4}$, noise variance
    one, and observation cap $M = 20{,}503$. Each method selects its rate and
    stop to minimize exact rare-subgroup risk over the search grid used in
    \cref{fig:conference-tuned-intro}. Bars show the common and rare
    coefficients' contributions to that risk. Colors distinguish incomplete
    learning, feature-sampling variation, and observation noise, as in
    \cref{eq:conference-decomposition}. Each method's two bars sum to its total
    rare-subgroup risk.
  }
  \label{fig:tuned-decomposition}
\end{figure}

\paragraph{Establishing the gaps.}

An exact positive recurrence for the diagonal error moments makes this
comparison possible without finding the minimizing choices. Exact rational
interval calculations certify the constants over the continuous parameter
regions. \Cref{sec:predictive-proofs} gives the proof, certificates, and
stronger bounds.

\subsection{Limiting oracle risks}

The finite-prevalence theorem gives explicit gaps for two signal families. To
compare arbitrary fixed signals and noise, we let $q \to 0$ and choose caps
$M_q$ with $M_qb \to \tau \in (0,\infty)$. The total cap grows, but the expected
number of rare observations tends to $\tau$. This rescaled budget can now be any
positive value.

The rate parameterization makes the learning time scales explicit. For fixed
$0 < \alpha \leq 1$, U's common mean error contracts by $1 - \alpha/4$ per
update, while its rare mean error and both SD mean errors contract by
$1 - b\alpha$. U therefore learns the common signal on a shorter time scale than
the rare signal. Under SD, both signals are learned on the time scale set by
rare prevalence.

Write $s = (\beta_C^*)^2$, $r = (\beta_R^*)^2$, $\nu = \sigma^2$, and
$v = s/4 + r$. The limiting rare-subgroup risk at initialization is $v/2$. For
$\alpha \in[0,1]$, define
\[
  d(\alpha) = \alpha(2 - \alpha),\qquad z_\tau(\alpha)
            = e^{-\tau d(\alpha)},\qquad g(\alpha)
            = \frac{\nu\alpha}{8 - \alpha}.
\]
Here $z_\tau$ describes the decay of the initial error, and $g$ is the limiting
common-coordinate noise contribution under U, before the factor $1/2$ in the
risk. The limiting objectives are
\begin{equation}
  \begin{aligned}
    F_S(\alpha;s,r,\nu,\tau) & = \frac12\left[ vz_\tau + \frac{\nu\alpha}{2 - \alpha}(1 - z_\tau) \right],           \\
    F_U(\alpha;r,\nu,\tau)   & = \frac12\left[ g + rz_\tau + \frac{\alpha(\nu + g)}{2 - \alpha}(1 - z_\tau) \right].
  \end{aligned} \label{eq:limiting-objectives}
\end{equation}

\paragraph{Why the limits differ.}

The signal terms $vz_\tau$ and $rz_\tau$ include both squared mean error and
feature-sampling variation. SD's signal term depends on both initial errors
through $v$. U learns the common signal on its shorter time scale, so only $r$
remains in its signal term. It pays common-coordinate observation noise $g$,
which also feeds into the rare error through $\nu + g$. SD's observation-noise
cost is the second term in $F_S$.

For U, the endpoint $\alpha = 0$ represents a limit of small positive rates.
Taking $b \ll \alpha_q \ll 1$ at the full cap learns the common signal with
vanishing noise cost while leaving the rare effect unchanged. This gives
$F_U(0) = r/2$. Literal zero-rate training retains both signal errors and has
limiting risk $v/2$.

The next theorem shows that minimizing these objectives gives the limits of the
full oracle searches over rates and stopping times.

\begin{theorem}[Limiting oracle risks]
  \label{thm:oracle-limits}
  In the model of \cref{sec:conference-setting}, fix finite $s,r,\nu \geq 0$. If
  $q \to 0$ and integer observation caps satisfy $M_qb \to \tau \in (0,\infty)$,
  then
  \[
    R_A^*(M_q)
    \longrightarrow L_A(\tau)
    :=              \min_{0 \leq \alpha \leq 1}F_A(\alpha;\tau), \qquad A
    \in             \{U,S\},
  \]
  with the signal and noise arguments as in \cref{eq:limiting-objectives}. More
  precisely, for every $0 < T_0 < T_1 < \infty$, uniformly over integer caps
  with $T_q = M_qb \in[T_0,T_1]$,
  \begin{equation}
    |R_S^*(M_q) - L_S(T_q)| \leq C_Sb,\qquad |R_U^*(M_q) - L_U(T_q)|
                            \leq C_Ub\log(1/b) \label{eq:oracle-limit-errors}
  \end{equation}
  for all sufficiently small $q$. The constants depend only on
  $s,r,\nu,T_0,T_1$.
\end{theorem}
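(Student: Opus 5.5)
The plan is to work with the exact second-moment recursion for the diagonal errors. Under fresh sampling with independent binary features and a diagonal update matrix $D=\diag(d_C,d_R)$, the diagonal entries $u_k=\E[(\beta_{C,k}-\beta_C^*)^2]$ and $w_k=\E[(\beta_{R,k}-\beta_R^*)^2]$ satisfy a closed two-dimensional linear recurrence. It has the form $(u_{k+1},w_{k+1})^\top = Q(u_k,w_k)^\top + \sigma^2 c$, with explicit entries built from $a$, $b$, the fourth moments $\E x_C^4=1/16$ and $\E x_R^4=b(1-3b)$, and $d_C,d_R$. The off-diagonal covariance does not feed the diagonal because the features are independent and centered. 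I take this recurrence from \cref{sec:variance-decomposition}, or derive it directly. The risk is $R_{R,k}=u_k/8+(1-q)^2w_k/2$, so everything reduces to analyzing $Q^k$.

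For SD, $D_S(\alpha)=\diag(4b\alpha,\alpha)$. Both coordinates contract at rate $1-b\alpha(2-\alpha)+O(b^2\alpha^2)$. The noise injection into $w$ is $\alpha^2\sigma^2 b$, and the coupling terms are $O(b)$ relative to this. Solving the $2\times2$ recurrence explicitly and writing $k=tb^{-1}$ with $t\le T_q$, I expect to show $R^S_{R,k}(\alpha)=\tfrac12[v e^{-t d(\alpha)}+\tfrac{\nu\alpha}{2-\alpha}(1-e^{-td(\alpha)})]+O(b)$ uniformly over $\alpha\in[0,1]$ and $t\in[0,T_1]$.

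Two further reductions are needed. First, the minimization over stops $t\le T$ collapses to $t=T$ whenever the bracket is decreasing in $t$, that is, whenever $v\ge \nu\alpha/(2-\alpha)$. Otherwise stopping at $0$ gives $v/2=F_S(0)$, so $\inf_{t\le T}F=\min_\alpha F_S(\alpha;T)$ in both cases. Second, rates $\alpha>1$ must be excluded. For $1<\alpha<2$ the map $\alpha\mapsto 2-\alpha$ gives the same contraction $d$ but a larger noise coefficient $\alpha/(2-\alpha)$, so these rates are dominated. For $\alpha\ge2$ the noise term $\ge\nu$ times a nonvanishing quantity, or the recursion is unstable, and the risk is at least $v/2$ up to $O(b)$. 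Each of these bounds must hold uniformly, with errors kept at $O(b)$. The lower bound for large $\alpha$ uses monotone positivity of the recurrence, since all entries of $Q$ and $c$ are nonnegative for the relevant quantities.

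For U, $D_U=\alpha I_2$. The common coordinate now lives on the fast time scale $1-\alpha/4\cdot(2-\alpha\cdot\text{const})$ and reaches its stationary level $u_\infty\approx 8g$ after $O(\log(1/b))$ steps, up to $O(b)$ corrections. The rare coordinate evolves slowly and is fed by $\nu+$ the common residual variance, which produces the $\nu+g$ factor. I will split the trajectory into a burn-in of $K=C\log(1/b)/\alpha$ steps and a remaining slow phase.

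The main obstacle is small $\alpha$. When $\alpha\lesssim b\log(1/b)$, the fast phase is not fast, and the common signal is only partially learned, contributing roughly $\tfrac{s}{8}e^{-\alpha k/2}$. I would treat this boundary layer with an explicit lower bound. Two facts drive it. For any $k\le M_q$, $R^U_{R,k}\ge \tfrac18 s e^{-k\alpha(2-\alpha)/4}+\tfrac12 r e^{-bk d(\alpha)}-O(b)$. And $F_U$ is Lipschitz near $\alpha=0$ with $F_U(0)=r/2$. Together these show that the infimum over this layer is $\ge \min_\alpha F_U-O(b\log(1/b))$. For the matching upper bound, I take $\alpha_q=b\log^2(1/b)\to0$ at the full cap, which gives $r/2+O(b\log(1/b))$. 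The overall U error is therefore $O(b\log(1/b))$, consistent with \cref{eq:oracle-limit-errors}.

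Finally, the oracle values converge to $L_A(T_q)$ because both the upper bound (an explicit candidate) and the lower bound (uniform approximation over all $\alpha$ and $k$) hold. Continuity of $L_A$ in $\tau$ then gives the limit statement.
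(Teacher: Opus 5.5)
Your overall architecture matches the paper's: the diagonal moment recursion, uniform fixed-rate approximations, positivity bounds at large rates, time monotonicity with $\alpha\mapsto 2-\alpha$ dominance, a $\log(1/b)$ split for the unscaled transient, and explicit candidates. The upper bound for U, however, fails as proposed. Write $T=M_qb$. Once the common transient has died out, a vanishing rate at the full cap has risk $\frac r2+\alpha_q(\nu/16-rT)+O(\alpha_q^2)+O(b)$. The common coordinate pays observation noise $g(\alpha_q)/2\approx\nu\alpha_q/16$, while the rare moment falls only by about $2rT\alpha_q$. With your choice $\alpha_q=b\log^2(1/b)$, the excess over $r/2$ is of order $b\log^2(1/b)$ whenever $\nu>16rT$. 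That is precisely the case $L_U(T)=r/2$. For instance, $r=0$, $\nu=1$ gives risk about $b\log^2(1/b)/16$ against $L_U(T)=0$. The rate should be only just large enough to make $\frac s8A^{M_q}=O(b)$, i.e.\ $M_q\alpha_q\geq 2\log(1/b)$, as with $\alpha_q=2b\log(1/b)/T$.

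A more serious problem arises when $rT>\nu/16$. Then the limiting minimizer $\alpha_U(T)$ is positive and $L_U(T)<r/2$, so no vanishing-rate candidate can match your lower bound; you must run near $\alpha_U(T)$ at the full cap. Uniformly over $T\in[T_0,T_1]$, that minimizer can be arbitrarily small near the threshold $T=\nu/(16r)$. There $M_q\alpha_U(T)$ need not reach $2\log(1/b)$, and the transient is not $O(b)$. The paper uses $\alpha_q=\max\{\alpha_U(T),2b\log(1/b)/T\}$ together with a uniform bound on $\partial_\alpha F_U$ over $[0,1]\times[T_0,T_1]$. This covers both cases and yields the $O(b\log(1/b))$ rate.

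Two other steps need repair. For SD, your componentwise picture is wrong. Write $G_k=u_k/4$ and $H_k=w_k$. The common moment is multiplied by $(1-b\alpha)^2$, so it decays like $e^{-2\alpha t}$, not $e^{-d(\alpha)t}$. The coupling $b\alpha^2G_k$ into $H_{k+1}$ has the same order as the noise injection $b\alpha^2\nu$; only the reverse coupling $b^3\alpha^2H_k$ is negligible. Your formula for $R^S_{R,k}$ is nevertheless correct, because the risk weights $G$ and $H$ equally up to $O(b)$ and the sum $W=G+H$ closes: $\dot W=-d(\alpha)W+\alpha^2\nu$ with $W(0)=v$. The approximation must therefore be run for $W$. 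It must also hold on a rate interval such as $[0,3]$ rather than $[0,1]$, so that the dominance of $(1,2)$ by $2-\alpha$ and the comparison at rate two transfer from the limiting objective to finite $q$.

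For U, the transient is governed by $k\alpha$, not by $\alpha$ alone. Early stops at moderate rates also leave $G_k$ far from $g$, and your displayed lower bound with $\frac s8e^{-k\alpha(2-\alpha)/4}$ is false there. At $\alpha=k=1$, $r=\nu=0$, $s>0$, the risk is $9s/128+O(b)$, below $\frac s8e^{-1/4}$. The $s$-term is not needed anyway. On retained rates with $k\alpha<2\log(1/b)$, $G_k\geq0$ and $H_k\geq r(1-2bk\alpha)$ give risk at least $r/2-O(b\log(1/b))\geq L_U(T)-O(b\log(1/b))$, since $L_U(T)\leq F_U(0)=r/2$. If $k\alpha\geq2\log(1/b)$, then $A^k\leq b$ and your slow-phase approximation applies. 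Separately, the stationary level of $u_k$ is $4g$, not $8g$.
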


The theorem reduces the limiting optimization to $\alpha \in[0,1]$ and the full
cap. This reduction follows from bounds over the original search, which still
includes every nonnegative constant rate and deterministic stop, including
choices that depend on prevalence and unstable rates. The error bounds describe
convergence as prevalence falls. Their constants are not optimized to certify a
gap at a specified prevalence; \cref{thm:conference-predictive} provides that
separate guarantee.

For $\nu > 0$, U's limiting minimizer is zero exactly when $r\tau \leq \nu/16$;
above that threshold, its unique minimizer is positive. SD has a unique positive
minimizer whenever $v,\nu > 0$. The scalar equations in
\cref{sec:oracle-variational} determine these rates without a grid search.

\paragraph{Proof outline.}

For a fixed positive $\alpha$, let $G$ and $H$ denote the common and rare
contributions before the factor $1/2$ in the limiting risk. Derivatives are
taken in rescaled time $t = kb$. Under SD, their sum $W = G + H$ obeys
$\dot W = -d(\alpha)W + \alpha^2\nu$, starting at $W(0) = v$. Under U, the
common moment rapidly approaches $g(\alpha)$, and the rare moment obeys
$\dot H = -d(\alpha)H + \alpha^2(\nu + g)$, starting at $H(0) = r$. Solving
these equations gives \cref{eq:limiting-objectives}.

To pass from these fixed-rate limits to the oracle risks, we also need bounds
that hold uniformly over large rates, early stops, and vanishing rates. For U,
these bounds track the common moment as it approaches $g$. We then construct
rates that attain each limiting minimum, including its zero endpoint.
\Cref{sec:oracle-limit-proofs} gives the full argument.

\subsection{A unique signal-allocation crossing}

Hold the limiting subgroup risk at initialization fixed while moving signal
between the features. With $v > 0$, write $r = \theta v$ and
$s = 4(1 - \theta)v$ for $\theta \in[0,1]$. Thus $\theta$ measures the rare
feature's share of $s/4 + r$, rather than its share of the unweighted squared
coefficient sum.

\begin{corollary}[Signal allocation]
  \label{cor:oracle-allocation}
  Fix $v,\nu,\tau > 0$. Along this allocation path, $L_S$ is constant and $L_U$
  increases strictly with $\theta$. There is a unique $\theta_* \in (0,1)$ such
  that $L_S = L_U$. Unscaled training has lower limiting risk for
  $\theta < \theta_*$, and standardization has lower limiting risk for
  $\theta > \theta_*$.
\end{corollary}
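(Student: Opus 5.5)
We work directly with the limiting objectives in \cref{eq:limiting-objectives}, which \cref{thm:oracle-limits} identifies as the limits of the oracle risks. Along the allocation path $r = \theta v$ and $s = 4(1-\theta)v$, the quantity $s/4 + r = v$ does not change. The objective $F_S$ depends on the signals only through $v$, so $F_S(\alpha)$ is the same function of $\alpha$ for every $\theta$. Hence $L_S$ is constant.

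For $F_U$, the signal enters only through the term $\tfrac12 r z_\tau(\alpha) = \tfrac12\theta v e^{-\tau d(\alpha)}$. On $[0,1]$ this is nondecreasing in $\theta$ and strictly increasing whenever $z_\tau(\alpha) > 0$, which always holds. So $F_U(\alpha;\theta)$ is strictly increasing in $\theta$ for every $\alpha$. Since the minimum over the compact set $[0,1]$ is attained, take $\theta_1 < \theta_2$ and a minimizer $\alpha_2$ at $\theta_2$. Then
\[
  L_U(\theta_1) \le F_U(\alpha_2;\theta_1) < F_U(\alpha_2;\theta_2) = L_U(\theta_2),
\]
which gives strict monotonicity. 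Continuity of $L_U$ in $\theta$ follows because $F_U$ is jointly continuous on $[0,1]\times[0,1]$ (note $2-\alpha \ge 1$ and $8-\alpha\ge 7$). Alternatively, $L_U$ is a minimum of functions affine in $\theta$, hence concave and finite, hence continuous on the closed interval with the endpoint check.

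The crossing then reduces to two endpoint comparisons followed by the intermediate value theorem.

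\emph{At $\theta = 0$.} Here $r = 0$, so $L_U \le F_U(0) = g(0)/2 = 0$. Meanwhile $L_S > 0$. To see this, note that $F_S(\alpha) \ge \tfrac12\min\{v z_\tau + \tfrac{\nu\alpha}{2-\alpha}(1-z_\tau)\}$. At $\alpha = 0$ this equals $v/2 > 0$. For $\alpha > 0$, both $z_\tau < 1$ and $\alpha > 0$ hold, so both terms are nonnegative and at least one is positive (using $\nu > 0$). Since $F_S$ is continuous and positive on a compact interval, its minimum is positive. Therefore $L_U(0) < L_S$.

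\emph{At $\theta = 1$.} Here $s = 0$ and $r = v$. For every $\alpha$,
\[
  F_U(\alpha) - F_S(\alpha) = \tfrac12\Bigl[g(\alpha) + \tfrac{\alpha g(\alpha)}{2-\alpha}(1 - z_\tau)\Bigr] \ge 0,
\]
with strict inequality for $\alpha > 0$ because $\nu > 0$. At $\alpha = 0$ the two are equal, both being $v/2$. We must rule out $L_S$ being attained only at $\alpha = 0$. This follows from the derivative:
\[
  \partial_\alpha F_S(0) = \tfrac12\bigl[-2\tau v + 0\bigr] < 0,
\]
since $d'(0) = 2$ and the noise term is $O(\alpha^2)$ near zero. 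So the minimizer $\alpha_S$ of $F_S$ is positive. Then
\[
  L_U \ge \min\Bigl\{F_U(0),\ \min_{\alpha>0}F_U\Bigr\}.
\]
At $\alpha = 0$ we have $F_U(0) = v/2 > F_S(\alpha_S) = L_S$. For any $\alpha > 0$ we have $F_U(\alpha) > F_S(\alpha) \ge L_S$. By compactness, $L_U > L_S$.

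The intermediate value theorem and strict monotonicity now give a unique $\theta_*\in(0,1)$ where the two limits agree, together with the stated orderings on either side. The only delicate step is the $\theta = 1$ endpoint, and specifically excluding the case where SD's optimum is at $\alpha = 0$, where $F_U$ and $F_S$ touch. The derivative computation above handles this.
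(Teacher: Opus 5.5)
Your proposal is correct and follows essentially the same route as the paper's proof. Both use the constant SD objective and the positive $\theta$-slope $vz_\tau(\alpha)/2$ of each U objective. Both compare zero U risk with positive SD risk at $\theta=0$. At $\theta=1$ both use the same case split: a positive U minimizer pays the extra noise $g$, while a zero U minimizer is beaten by a small positive SD rate, which you make explicit via $\partial_\alpha F_S(0)=-\tau v<0$. Both finish with the intermediate value theorem. Your argmin argument for strict monotonicity, with joint continuity for continuity, does the same job as the paper's slope bounds $[ve^{-\tau}/2,\,v/2]$.
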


\begin{proof}
  The SD objective depends only on $v$. Each U objective has slope
  $vz_\tau(\alpha)/2 \in[ve^{-\tau}/2,v/2]$ in $\theta$, so its minimum is
  continuous and strictly increasing. At $\theta = 0$, U has zero risk and SD
  has positive risk. At $\theta = 1$, a positive U minimizer incurs strictly
  more noise than the same SD rate. If U instead minimizes at zero, a small
  positive SD rate improves on $v/2$. The opposite signs at the endpoints give
  the unique crossing.
\end{proof}

\Cref{fig:intro-signal-crossing} illustrates this crossing. At $v = \nu = 1$ and
$\tau = 2$, the crossing is numerically $\theta_* \approx 0.8377$.
\Cref{sec:oracle-variational} gives a variational formula and an equivalent
boundary obtained by varying the common signal at fixed rare signal.

\paragraph{Noise and budget.}

Changing noise or budget can also change the preferred scaling. With a positive
common signal, U wins at sufficiently small and sufficiently large noise, while
SD can win between them. For fixed $r,\nu > 0$ and finite $s \geq 0$, a large
enough rescaled budget favors SD: $L_S(\tau)/L_U(\tau) \to 4/5$ as
$\tau \to \infty$, although convergence can be slow. This statement takes
$q \to 0$ before $\tau \to \infty$ and does not cover joint limits with
$M_qb \to \infty$. \Cref{sec:oracle-noise-budget} gives the expansions and
examples.

\paragraph{Finite prevalence and more common features.}

Exact-moment searches at six prevalences assess the limiting predictions. At the
smallest prevalence, they agree with the limiting choice of scaling for the
tested cases away from ties. At larger prevalences, a high-noise example favors
SD on the finite grids while the limit favors U. These searches assess the
approximation without certifying the continuous optima
(\cref{sec:oracle-finite}).

Replacing the common feature with $m$ independent balanced binary features
preserves an exact recurrence for their combined error and the rare error. With
the total squared common signal and the rare signal held fixed, increasing $m$
cannot lower either method's oracle risk. Numerical searches retain both
orderings through $m = 64$, although their gaps change with dimension.
\Cref{sec:common-block-details} gives the recurrence, proof, and numerical
comparisons; the two-feature separation bounds and limiting characterization are
not asserted for this larger model.

The next section examines how the two orderings respond to changes in training
and to selection among the realized predictors.

\section{From oracle tuning to validation selection}

The oracle fixes its rate and stop before observing training. Validation instead
selects among fitted predictors. We first ask whether the two risk orderings
survive changes to the training procedure when tuning choices are fixed in
advance. We then examine how selection among fitted predictors changes the
comparison, including when validation contains no rare observations.

\subsection{Controlled departures from the oracle model}
\label{sec:theory-bridge}

We change three aspects of training: whether observations are reused, whether
preprocessing is estimated, and whether an intercept is fitted. Eight training
arms introduce these changes separately and in combination. We retain the
independent binary model, coefficient pairs $(0.1,1)$ and $(1,0.1)$, and noise
variance one. At $q = 0.02$ and $0.005$, the caps give roughly one, two, or four
expected rare occurrences. Arms that reuse observations draw from 256 noisy
training rows. Empirical preprocessing uses all 256 rows, so its data
requirements can exceed the SGD observation cap.
\Cref{sec:theory-bridge-protocol} gives the full protocol.

For each scaling and training arm, we calibrate a rate factor and stop to
minimize mean rare risk, then evaluate those fixed choices on independent
training runs. With estimated preprocessing, the factor determines the constant
learning rate from the training data.

Both orderings persist across all eight training arms. SD has lower rare risk
for the dominant rare signal, and U has lower rare risk for the dominant common
signal (\cref{fig:bridge-controlled} in the appendix). This holds in all 96
comparisons across the arms, signal pairs, prevalences, and caps. These
numerical comparisons use a conservative finite rate search and cover only the
two tested signal pairs. Recalibrating on disjoint halves of the saved
calibration runs preserves the ordering in all splits for 83 of the 84
comparisons with estimated tuning choices. One small gap reverses in 2 of 200
splits (\cref{sec:calibration-sensitivity}).

Selecting among the fitted predictors can reverse an ordering even with fresh
observations and population preprocessing. \Cref{tab:bridge-selection} compares
the fixed-intercept baseline with the arm that adds only a fitted intercept,
initialized at zero. With the fitted intercept, calibration favors SD, while
both rare validation loss and exact population rare risk select predictors that
give U lower risk on average. Exact selection changes the SD-minus-U gap from
$-0.1046$ to $+0.0164$, reducing U's rare risk by $0.2399$ and SD's by $0.1189$.
U's larger gain overturns SD's advantage under calibration. In the baseline,
exact selection narrows the gap from $-0.0192$ to $-0.0059$, preserving SD's
advantage. Both orderings persist under exact selection in all 12 baseline
settings.

The reported gains can combine adaptation to each training run with correction
of a suboptimal calibration choice. For the fitted-intercept setting in
\cref{tab:bridge-selection}, minimizing exact expected risk over the same finite
rate-and-stop family gives $0.23184$ for U and $0.13823$ for SD\@. These are
expectations over training randomness, distinct from the table's Monte Carlo
means. SD therefore retains its advantage at the best fixed choices, so
calibration suboptimality alone cannot explain the reversal
(\cref{sec:bridge-exact-calibration}).

Every validation set in this example contains rare observations. The reversal
under exact selection shows that validation noise is not needed: choosing the
best predictor in each training run is a different objective from choosing the
best tuning settings on average. The comparison isolates intercept fitting as
sufficient for a reversal in this setting. U gains more than SD in $71.3\%$ of
runs under exact selection, with median gains $0.1456$ and $0.0626$,
respectively. The selected U predictors have smaller mean prediction errors
within the rare group, while SD retains a slightly smaller error in the common
coefficient. For U, the intercept and rare-coefficient errors nearly cancel: the
cross term removes $99.9\%$ of U's contributions to mean squared error from
those two coordinates, compared with $27.4\%$ for SD\@. This cancellation
accounts for the selected risk ordering even though U has larger errors in each
coordinate separately (\cref{sec:bridge-cancellation}). \Cref{sec:bridge-gains}
examines the gain distributions.

\begin{table}[t]
  \centering
  \begin{tabular}{lrrrrrr}
    \toprule
                           & \multicolumn{2}{c}{Rare risk} & \multicolumn{2}{c}{Reduction from calibration} & \multicolumn{2}{c}{SD-minus-U gap} \\
    \cmidrule(lr){2-3} \cmidrule(lr){4-5} \cmidrule(lr){6-7}
    Selection              &        U &       SD &        U &       SD &      Mean &      MCSE \\
    \midrule
    \multicolumn{7}{l}{\emph{Fresh observations, fitted intercept}}                            \\
    Separate calibration   & $0.2409$ & $0.1364$ &      --- &      --- & $-0.1046$ & $0.00364$ \\
    Validation (2048 rows) & $0.0268$ & $0.0344$ & $0.2141$ & $0.1020$ & $+0.0075$ & $0.00132$ \\
    Exact population risk  & $0.0011$ & $0.0175$ & $0.2399$ & $0.1189$ & $+0.0164$ & $0.00128$ \\
    \midrule
    \multicolumn{7}{l}{\emph{Fresh observations, fixed zero intercept (baseline)}}             \\
    Separate calibration   & $0.1528$ & $0.1337$ &      --- &      --- & $-0.0192$ & $0.00070$ \\
    Validation (2048 rows) & $0.0393$ & $0.0259$ & $0.1136$ & $0.1078$ & $-0.0134$ & $0.00065$ \\
    Exact population risk  & $0.0147$ & $0.0088$ & $0.1381$ & $0.1248$ & $-0.0059$ & $0.00056$ \\
    \bottomrule
  \end{tabular}
  \caption{
    Rare risk and gains from selection with $(\beta_C^*,\beta_R^*) = (0.1,1)$,
    $q = 0.02$, and $M = 204$, with fresh observations and population
    preprocessing. The fitted-intercept arm and baseline (arms 2 and 1 in
    \cref{tab:bridge-arms}) share the observation cap and differ only in whether
    an intercept is fitted; all coefficients are initialized at zero. Positive
    reductions indicate improvement over the separately calibrated choice for
    the same scaling; negative SD-minus-U gaps favor SD\@. Means use 4096 paired
    evaluation runs. MCSEs refer to the paired gaps and condition on the
    calibration choices. \Cref{sec:bridge-exact-calibration} gives exact
    expected-risk calibration benchmarks for the same candidate family.
  }
  \label{tab:bridge-selection}
\end{table}

\paragraph{Missing rare validation observations.}

Validation can also fail to represent the subgroup whose risk we want to
minimize. For the fresh baseline with a dominant rare signal, $q = 0.005$, and
$M = 804$, more than half of the validation sets with 128 rows contain no rare
observations: the group is absent in 2206 of 4096 sets. In those runs, selection
falls back to aggregate loss.

When the rare group is present, the SD-minus-U gap is $-0.105$, favoring SD.
When it is absent, the gap is $+2.193$, favoring U. Overall, validation
selection changes the gap from $-0.0216$ under calibration to $+1.133$ (MCSE
$0.062$) (\cref{sec:theory-bridge-protocol,fig:bridge-validation}).

\section{Discussion}

Normalization changes the noise cost of learning each signal. In the
rare-prevalence limit, unscaled training can learn the common signal with
vanishing noise cost while leaving the rare effect unchanged. Standardization
can instead avoid the common-coordinate noise cost of learning a rare effect.
Noise and budget determine which compromise is preferable, even after each
method has optimized its learning rate and stopping time.

The limiting formulas make this comparison explicit for arbitrary fixed signals
and noise. With positive noise, moving signal to the rare feature changes the
preferred scaling exactly once when the limiting risk at initialization is held
fixed and positive. The finite-prevalence theorem establishes positive gaps for
two signal families while rare observations remain scarce. Together, these
results show both why the preferred scaling changes and where one scaling has a
guaranteed advantage.

Both theorems concern two independent binary features, known population
preprocessing, zero initialization, a fixed zero intercept, and final iterates
under fresh sampling. Applying the limiting comparison at a finite prevalence
also requires care: numerical searches can favor a different scaling even when
the limiting risks are separated. The large-budget ratio $4/5$ takes prevalence
to zero before letting the rescaled budget grow. Extending the comparison to
joint limits, learning-rate schedules, averaging, batch selection,
data-dependent stopping, or adaptive optimizers requires further analysis.

Replacing the common feature with an independent block preserves the exact
moment reduction. At fixed total signal strength, increasing the number of
common features cannot lower either method's oracle risk. The numerical
comparison retains both orderings through 64 common features, with dimension
changing the size of the gaps. Uniform separation bounds for this larger model
remain open.

The controlled study shows that both orderings can survive changes to the
training procedure. They persist for the tested signal pairs under observation
reuse, estimated preprocessing, and intercept fitting when tuning choices are
fixed before evaluation. Selecting a predictor after observing training can
reverse an ordering, even with exact population risk. An extension to adaptive
selection would therefore need to account for the benefit each scaling gains
from choosing among its realized predictors.

Feature normalization therefore belongs in the statistical specification of
training under a finite budget: even in a two-feature model, tuning a global
learning rate and stopping time need not make different relative scalings
equivalent in rare-subgroup risk.

\section*{AI Disclosure}

We used generative AI tools, including OpenAI Codex, to plan research, search
the literature, derive mathematical results and review proofs, implement
experiments and independent numerical checks, and edit the manuscript.

\clearpage
\typeout{ALT-MAIN-PAGES: \number\numexpr\value{page}-1\relax}
\ifnum\value{page}>13
\PackageWarningNoLine{normreg-multi}{Draft main text exceeds the first 12 pages
  that ALT reviewers are expected to read}
\fi

\bibliography{references}

\clearpage

\appendix
\crefalias{section}{appendix}

\section*{Supplementary Materials}

The supplement gives the proof and certificates for the finite-prevalence oracle
comparison (\cref{sec:predictive-proofs}), the proof of the limiting oracle
characterization (\cref{sec:oracle-limit-proofs}), and its variational
consequences (\cref{sec:oracle-variational}). \Cref{sec:common-block-details}
gives the common-feature extension and its numerical study.
\Cref{sec:predictive-intro-figure} describes the numerical illustrations and
finite-prevalence checks of the limiting predictions.
\Cref{sec:empirical-methods} specifies the controlled study's training,
calibration, and selection protocols and reports its numerical checks.
Reproduction instructions are in \cref{sec:reproducibility}.

\section{Proof of the comparison after oracle tuning}
\label{sec:predictive-proofs}

We prove \cref{thm:conference-predictive} directly at positive prevalence. The
bounds below are explicit functions of $q$ and $\tau = Mq(1 - q)$. Finite minima
over rate intervals bound the continuous search; they do not restrict the oracle
to a rate grid. Analytic bounds cover every rate above one. An independent
checker certifies the reported positive regions with rational arithmetic and
checks the moment bounds against observation-level tensor calculations.

The source supplement also includes independent small-prevalence checks.

\subsection{Exact moments}

By homogeneity, take $\sigma^2 = 1$. Put $s = (\beta_C^*)^2$,
$r = (\beta_R^*)^2$, $b = q(1 - q)$, and $w = (1 - q)^2$. Write
$C_k = \E\bigl((\beta_{C,k} - \beta_C^*)^2\bigr)$ and
$H_k = \E\bigl((\beta_{R,k} - \beta_R^*)^2\bigr)$. Independence gives
\begin{equation}
  R_{R,k} = C_k/8 + wH_k/2.
\end{equation}
The required feature moments are $x_C^2 = 1/4$, $\E(x_R^2) = b$,
$\E(x_R^4) = b(1 - 3b)$, and $\E(x_C^2x_R^2) = b/4$. Mixed moments with an odd
power of $x_C$ vanish. Expanding the squared update and using the conditional
noise assumptions gives
\begin{equation}
  \binom{C_{k+1}}{H_{k+1}} = \begin{pmatrix}
                               A           & 4u^2b \\
                               b\alpha^2/4 & D_b
                             \end{pmatrix}\binom{C_k}{H_k}
                             + \binom{4u^2}{b\alpha^2},\qquad \binom{C_0}{H_0}
                           = \binom{s}{r}, \label{eq:predictive-moments}
\end{equation}
where $A = (1 - u)^2$, $D_b = 1 - b\alpha(2 - \alpha + 3b\alpha)$, and
$u = \alpha/4$ for U or $u = b\alpha$ for S. The diagonal equations and risk do
not involve cross moments of the coefficient errors, so coefficient signs do not
affect risk. Every matrix entry is nonnegative for every $\alpha \geq 0$: in
particular, $D_b = (1 - b\alpha)^2 + b\alpha^2(1 - 4b) \geq 0$. Increasing
either initial squared coefficient cannot reduce risk.

For $0 < \alpha \leq 1$, the row sums are
\[
  1 - u\{2 - u(1 + 4b)\} < 1,\qquad 1 - b\alpha\{2 - 5\alpha/4 + 3b\alpha\} < 1.
\]
The homogeneous moments contract and the second moments due to noise stay
bounded. The cross moment is bounded by $\sqrt{C_kH_k}$, so all positive
candidate rates used below are mean-square stable. At zero rate, the predictor
stays at initialization.

\subsection{Explicit finite-prevalence bounds}
\label{sec:predictive-finite-bounds}

Throughout, $0 < q \leq 1/16$ and $1 \leq \tau \leq 4$. Let $N = 4096$ and
$a_j = j/N$, for $j = 0,\ldots,N$. Define
\[
  \begin{aligned}
    d_b(a) & = a(2 - a + 3ba),                                                            \\
    n_j    & = \min\left\{ w,\frac{wa_j}{2 - a_j + 3ba_j} + \frac{a_j}{8 - a_j} \right\}, \\
    p_j    & = \min\left\{ \frac18,\frac{wa_j}{2(2 - ba_j)} \right\},
  \end{aligned}
\]
and
\[
  z_{R,j} = \exp\left\{ -\frac{\tau d_b(a_{j+1})}
                                  {1 - bd_b(a_{j+1})} \right\},\qquad z_{C,j}
          = \exp\left\{ -\frac{2\tau a_{j+1}}{1 - ba_{j+1}} \right\}.
\]
Define the lower bounds
\begin{equation}
  \begin{aligned}
    L_R(q,\tau) & = \min\left\{ \frac{w}{2(1 + 3b)}, \min_{0 \leq j<N}\frac{n_j + (w - n_j)z_{R,j}}2 \right\}, \\
    L_C(q,\tau) & = \min_{0 \leq j<N}\left\{ p_j + \left( \frac18 - p_j \right)z_{C,j} \right\}.
  \end{aligned} \label{eq:predictive-finite-lower}
\end{equation}
The integer $N$ controls only the precision of the lower bounds. All
denominators in these bounds are positive on the stated domain.

For the upper bounds, use $k = M$ and the stable rates
\[
  \alpha_S = \frac{2}{\tau + 4},\qquad \alpha_U = \frac{8b}{\tau} = \frac8M.
\]
Put
\[
  \begin{gathered}
    p_S = \frac{\alpha_S(1 - q)}{2 - (1 - q)\alpha_S},\qquad G = \frac{\alpha_U}{2 - \alpha_U/4}, \\
    H_* = \frac{1/100 + b\alpha_U/(2 - \alpha_U/4) + \tau\alpha_U^2(1 + G/4)}
               {1 - \tau b\alpha_U^2G/4}.
  \end{gathered}
\]
Here $\alpha_S \leq 2/5$, $\alpha_U \leq 1/2$, and
$\tau b\alpha_U^2G/4 \leq 1/960$, so $H_*$ is well defined. Define
\begin{equation}
  \begin{aligned}
    V_S(q,\tau) & = \frac12\left[ p_S + \left( w + \frac1{400} - p_S \right) e^{-\tau\alpha_S\{2-(1-q)\alpha_S\}} \right], \\
    V_U(q,\tau) & = \frac{e^{-4}}8 + \frac G8 + \left( \frac w2 + \frac{bG}8 \right)H_*.
  \end{aligned}
\end{equation}
Define the gap bounds
\begin{equation}
  \Delta_R(q,\tau) = L_R(q,\tau) - V_S(q,\tau),\qquad \Delta_C(q,\tau)
                   = L_C(q,\tau) - V_U(q,\tau).
\end{equation}
For the rare-signal and common-signal families, respectively, we prove
\begin{equation}
  \begin{aligned}
    R_U^*(M) - R_S^*(M) & \geq \Delta_R(q,\tau), \\
    R_S^*(M) - R_U^*(M) & \geq \Delta_C(q,\tau),
  \end{aligned} \label{eq:predictive-oracle-gaps}
\end{equation}
throughout $0 < q \leq 1/16$ and $1 \leq \tau \leq 4$, with $\sigma^2 = 1$ as
above. For general positive noise variance, multiply both bounds by $\sigma^2$.
The bounds need not be positive everywhere on this domain. A nonpositive bound
does not establish the opposite ordering.

\subsection{Dominant rare signal}

Suppose $r = 1$ and $s \leq 1/100$. For U and $\alpha \in[0,1]$,
\[
  D_b - A = \frac{(1 - 4b)\alpha}{16} \{8 + (12b - 1)\alpha\} \geq 0.
\]
Dropping nonnegative coupling terms in \cref{eq:predictive-moments} and summing
the direct noise terms gives, with $z = D_b^k$,
\begin{equation}
  R_{R,k}^U \geq \frac12\left[ wz + \left\{ \frac{w\alpha}{2 - \alpha + 3b\alpha} + \frac{\alpha}{8 - \alpha} \right\}(1 - z) \right].
\end{equation}
For the common-noise term, use $1 - A^k \geq 1 - D_b^k$. The formula extends
directly to zero rate. Its right side is affine in $z$. Taking the smaller of
its values at $z = 1$ and $z = D_b^M$ is equivalent to capping the noise
coefficient at $w$.

For $0 \leq x < 1$, the logarithm series gives $-\log(1 - x) \leq x/(1 - x)$.
Consequently,
\[
  D_b^M \geq \exp\left\{ -\frac{\tau d_b(\alpha)}
                                      {1 - bd_b(\alpha)} \right\}.
\]
Both $d_b(\alpha)$ and the noise coefficient increase on $[0,1]$. For
$\alpha \in[a_j,a_{j+1}]$, use the left endpoint for the capped noise
coefficient and the right endpoint for the exponential lower bound. This gives
the $j$th term in $L_R$ for every $0 \leq k \leq M$.

For every $\alpha \geq 1$, put $m_b = 1/(1 + 3b)$. The identity
\[
  D_bm_b + b\alpha^2 - m_b = \frac{2b\alpha(\alpha - 1)}{1 + 3b} \geq 0
\]
and positivity of the recurrence imply $H_k \geq m_b$ by induction from
$H_0 = 1$. Thus $R_{R,k}^U \geq w/\{2(1 + 3b)\}$ at every finite stop, including
zero. This proves the remaining term in $L_R$, without a stability restriction.

For the S candidate, retain the exact risk weight by setting
$W_k = C_k/4 + wH_k = 2R_{R,k}^S$. Let
$\rho = 1 - b\alpha\{2 - (1 - q)\alpha\}$. The recurrence gives
\[
  W_{k+1} \leq \rho W_k + b\alpha^2(1 - q),
\]
because the common coefficient has multiplier $\rho$ and
\[
  \rho w - (wD_b + b^3\alpha^2) = 2wbq(1 - 2q)\alpha^2 \geq 0.
\]
At $\alpha = \alpha_S$, $0 < \rho < 1$ and $W_0 \leq w + 1/400$. Geometric
summation therefore bounds the risk by
\[
  \frac12\left[ p_S + (w + 1/400 - p_S)\rho^M \right] \leq V_S(q,\tau).
\]
The last step uses $\rho^M \leq e^{-\tau\alpha_S\{2-(1-q)\alpha_S\}}$ and
$w + 1/400 > p_S$. Hence the risk at every U rate and stop exceeds this S
candidate's risk by at least $\Delta_R(q,\tau)$, uniformly over the rare-signal
family.

\subsection{Dominant common signal}

Suppose $s = 1$ and $r \leq 1/100$. Under S, put $A = (1 - b\alpha)^2$. Since
$D_b - A = b\alpha^2(1 - 4b) \geq 0$, positivity gives
\[
  C_k \geq A^k,\qquad H_k
      \geq b\alpha^2\sum_{j=0}^{k-1}D_b^j
      \geq b\alpha^2\sum_{j=0}^{k-1}A^j.
\]
For $0 < \alpha \leq 1$, geometric summation yields
\[
  R_{R,k}^S \geq \frac{A^k}{8} + \frac{w\alpha}{2(2 - b\alpha)}(1 - A^k).
\]
The expression is affine in $A^k$. Minimizing over $k \leq M$ caps its second
coefficient at $1/8$ and replaces $A^k$ by $A^M$. Moreover,
\[
  A^M \geq \exp\left\{ -\frac{2\tau\alpha}{1 - b\alpha} \right\}.
\]
The plateau and the magnitude of the exponent both increase with $\alpha$. Their
left and right endpoints give the $j$th term in $L_C$ on each rate interval.
Zero rate gives risk at least $1/8$ directly.

For $\alpha \geq 1$, if $A \geq 1$ then $C_k \geq 1$ and the risk is at least
$1/8$. Otherwise $0 < b\alpha < 2$ and the same geometric bound gives
\[
  R_{R,k}^S \geq \min\left\{ \frac18, \frac{w\alpha}{2(2 - b\alpha)} \right\}
            \geq \min\left\{ \frac18,\frac{w}{2(2 - b)} \right\}
            =    \frac18.
\]
The last equality follows from $q \leq 1/16$. Since $L_C \leq 1/8$, the bound
covers all larger rates as well.

For the U candidate, set $\alpha = \alpha_U$, $u = \alpha/4$, and
$H_{\max} = \max_{0 \leq j \leq M}H_j$. Summing the common equation gives
\[
  C_j \leq A^j + G(1 + bH_{\max}).
\]
Since $D_b \leq 1$, drop the rare-coordinate contraction and sum up to any
$j \leq M$. Using $\sum_{j \geq 0}A^j = 1/\{u(2 - u)\}$ gives
\[
  H_{\max} \leq \frac1{100}
                + \frac{b\alpha}{2 - \alpha/4}
                + \tau\alpha^2\left( 1 + \frac G4 \right)
                + \frac{\tau b\alpha^2G}{4}H_{\max}.
\]
Thus $H_{\max} \leq H_*$. Also,
$A^M = (1 - \alpha/4)^{2M} \leq e^{-\alpha M/2} = e^{-4}$, which gives
$R_{R,M}^U \leq V_U(q,\tau)$. The risk at every S rate and stop therefore
exceeds this U candidate's risk by at least $\Delta_C(q,\tau)$, uniformly over
the common-signal family.

Taking the two oracle infima proves \cref{eq:predictive-oracle-gaps}.
Restricting each search to stable rates preserves both lower bounds and both
candidates. Rescaling by $\sigma^2$ gives the bounds for general positive noise
variance. The certificates below then imply \cref{thm:conference-predictive}.
The argument requires neither stability of the losing method's rate nor
attainment of either infimum.

\subsection{Positive regions and the small-prevalence corollary}
\label{sec:predictive-certification}

\Cref{tab:predictive-certificates} summarizes the checker's certified lower
bounds for the two signal families.
\begin{table}[htbp]
  \centering
  \begin{tabular}{@{}lrrr@{}}
    \toprule
    Family & Maximum prevalence $Q$ &               Gap bound & Certificate leaves \\
    \midrule
    Rare   &                 $1/20$ &  $\Delta_R \geq 1/1000$ &         $77{,}510$ \\
    Common &                 $1/25$ &    $\Delta_C \geq 1/50$ &              $103$ \\
    Rare   &                 $1/50$ &  $\Delta_R \geq 7/1000$ &         $76{,}623$ \\
    Common &                 $1/50$ &   $\Delta_C \geq 9/200$ &               $58$ \\
    Rare   &              $10^{-4}$ & $\Delta_R \geq 11/1000$ &          $7{,}158$ \\
    Common &              $10^{-4}$ &  $\Delta_C \geq 11/200$ &              $105$ \\
    \bottomrule
  \end{tabular}
  \caption{
    Certified gap bounds for $0 < q \leq Q$, $1 \leq \tau \leq 4$, and
    $\sigma^2 = 1$. Each bound holds throughout the corresponding signal family.
    Certificate leaves count the accepted boxes in the interval certificates.
  }
  \label{tab:predictive-certificates}
\end{table}

Here is the full certification rule. On a box
$[a_0,a_1] \times[q_0,q_1] \times[t_0,t_1]$, set $b_i = q_i(1 - q_i)$ and
$w_i = (1 - q_i)^2$. For the rare lower bound, use $w_1$, the capped noise
coefficient at $(a_0,q_1)$, and the exponential lower bound at $(a_1,q_1,t_1)$.
For the common lower bound, use $\min\{1/8,w_1a_0/[2(2 - b_0a_0)]\}$ for the
plateau and the exponential lower bound at $(a_1,q_1,t_1)$. These substitutions
follow from monotonicity of $b$, $d_b$, the rate plateaus, and the exponents on
the stated ranges. Also include the rare large-rate floor at $q_1$.

The S upper bound decreases with $q$ and $\tau$: its plateau decreases, its
initial coefficient decreases, and its exponent
$4\tau(\tau + 3 + q)/(\tau + 4)^2$ increases. Use $V_S(q_0,t_0)$ on the box. For
the U upper bound, substitute $b_1$ for $b$, $w_0$ for $w$, and $t_0$ for
$\tau$. Indeed, $\alpha_U$, $G$, the numerator of $H_*$, and its feedback
coefficient increase with $b$ and decrease with $\tau$ after substituting
$\alpha_U = 8b/\tau$.

To enclose exponentials, for rational $x \in[0,4]$ put
$P_{18}(x) = \sum_{j=0}^{18}x^j/j!$. Positivity and the ratio of successive
omitted terms give
\begin{equation}
  P_{18}(x) \leq e^x \leq P_{18}(x) + \frac{x^{19}/19!}{1 - x/20}.
\end{equation}
Taking reciprocals encloses $e^{-x}$. The checker applies this rule to one
fourth of each exponent and raises the endpoints to the fourth power. All
exponents lie in $[0,16]$, so the rule covers every evaluation using rational
arithmetic only.

Start with $[0,1] \times[0,Q] \times[1,4]$ for each table row; the formulas
extend continuously to $q = 0$. Accept a box only if its lower margin reaches
the stated gap. Otherwise bisect the dimension with largest width among
$a_1 - a_0$, $q_1 - q_0$, and $(t_1 - t_0)/3$, breaking ties in that order. Stop
bisecting rate intervals at width $1/N$. All rate endpoints therefore align with
the partition defining \cref{eq:predictive-finite-lower}; an accepted box bounds
every constituent interval's expression. The accepted boxes cover the domain,
and their volumes sum exactly to $3Q$. Thus the certificates establish the
bounds over continuous prevalence and budget ranges, including every admissible
integer stop. No sampled rate or prevalence search is used as proof.

At $q = 1/50$ and $M = 102$, direct evaluation of the same formulas with
rational exponential enclosures gives $\Delta_R > 0.0092472$ and
$\Delta_C > 0.0641380$. These are certified family bounds, not estimates of the
oracle gaps. As a separate diagnostic, the checker compares 888 bounds with
80-digit observation-level tensor powers, including zero stops, both signal
families, and rates above the stability range.

\begin{corollary}[Small prevalence]
  Under the assumptions of \cref{thm:conference-predictive}, if
  $0 < q \leq 10^{-4}$, then the rare-signal gap is at least $11\sigma^2/1000$
  and the common-signal gap is at least $11\sigma^2/200$.
\end{corollary}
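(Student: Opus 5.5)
The plan is to obtain the corollary directly from the gap inequalities \cref{eq:predictive-oracle-gaps} together with the last two rows of \cref{tab:predictive-certificates}; no new analysis of the SGD moments is needed. First reduce to $\sigma^2 = 1$ by homogeneity. The moment recurrence \cref{eq:predictive-moments} is linear in $(s,r,\sigma^2)$ jointly, so every risk $R_{R,k}^A(\alpha)$ scales by $\sigma^2$ when all three are scaled together. Hence both oracle infima and their difference scale by $\sigma^2$. The two signal families of \cref{thm:conference-predictive} correspond exactly to the families used in the proof ($r = 1$, $s \leq 1/100$ and $s = 1$, $r \leq 1/100$).

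Next, for $0 < q \leq 10^{-4} \leq 1/16$ and $1 \leq \tau \leq 4$, \cref{eq:predictive-oracle-gaps} gives $R_U^*(M) - R_S^*(M) \geq \Delta_R(q,\tau)$ for the rare-signal family and $R_S^*(M) - R_U^*(M) \geq \Delta_C(q,\tau)$ for the common-signal family. These hold uniformly over each family, because the lower bounds used only $r = 1$ with $s \geq 0$ (resp.\ $s = 1$ with $r \geq 0$), and the candidates used only $s \leq 1/100$ (resp.\ $r \leq 1/100$). It then remains to show that $\Delta_R \geq 11/1000$ and $\Delta_C \geq 11/200$ on the box $(0,10^{-4}] \times [1,4]$.

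That final step is exactly the content of the $Q = 10^{-4}$ rows of \cref{tab:predictive-certificates}, produced by the interval certification rule of \cref{sec:predictive-certification}. The argument starts from the box $[0,1]\times[0,10^{-4}]\times[1,4]$. On each sub-box it applies the monotone substitutions ($b_1, w_1$ for the lower bounds; $V_S(q_0,t_0)$ and $b_1, w_0, t_0$ for the U candidate), together with the rational enclosures of exponentials via $P_{18}$. It then verifies that the lower margin reaches the stated constant on every accepted leaf, and that the leaves tile the domain, with volumes summing to $3Q$. Rate intervals are refined only down to width $1/N$, aligned with the grid in \cref{eq:predictive-finite-lower}, so each accepted box bounds the minimum over $j$ in $L_R$ and $L_C$. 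The large-rate floor $w/\{2(1+3b)\}$ must also be checked, but it is trivially above $V_S + 11/1000$ at such small $q$.

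The main obstacle is this certification. It is the only genuinely computational part, since it needs a correct implementation of the monotonicity directions and exponential enclosures. For the rare family, near-minimizing U rates make the margin thin and force many leaves. As a sanity check before trusting the certificate, I would evaluate the limiting versions at $q = 0$, namely $L_R(0,\tau) - V_S(0,\tau)$ and $L_C(0,\tau) - V_U(0,\tau)$. I would confirm analytically that their minima over $\tau \in [1,4]$ exceed the claimed constants with some slack, since continuity in $q$ then explains why the $10^{-4}$ bounds improve on the $1/20$ and $1/25$ rows. Multiplying by $\sigma^2$ then completes the proof.
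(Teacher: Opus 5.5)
Your proposal is correct and follows the paper's proof. You reduce to $\sigma^2=1$ by homogeneity, apply \cref{eq:predictive-oracle-gaps} (valid because $10^{-4}\le 1/16$) uniformly over each signal family, invoke the certified $Q=10^{-4}$ rows of \cref{tab:predictive-certificates}, and rescale by $\sigma^2$.

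One small correction to your paraphrase of the certification rule. The common-family plateau $wa/\{2(2-ba)\}$ increases with $b$, so a valid box lower bound must use $b_0$ there, as the paper does with $\min\{1/8, w_1a_0/[2(2-b_0a_0)]\}$. The value $b_1$ belongs only in the exponential lower bounds and in the rare noise coefficient.
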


The last two rows of \cref{tab:predictive-certificates} prove the corollary. The
certified gaps grow as the prevalence range narrows. These are sufficient
bounds, not the largest possible gaps or prevalence ranges: the proof discards
nonnegative coupling terms and replaces discrete powers by exponential bounds.

\section{Proof of the limiting oracle characterization}
\label{sec:oracle-limit-proofs}

We prove \cref{thm:oracle-limits} uniformly over the original rate-and-stop
domain. Throughout, $s,r,\nu \geq 0$ are fixed and finite, $b = q(1 - q)$,
$w = (1 - q)^2$, $v = s/4 + r$, and $T = M_qb \in[T_0,T_1]$ for fixed
$0 < T_0 < T_1 < \infty$. Constants in the $O(\cdot)$ terms may depend on these
fixed quantities, but not on the rate, stop, or cap in the stated ranges.

\subsection{Exact moments and large-rate bounds}

Rescale the common moment in \cref{eq:predictive-moments} by setting
$G_k = C_k/4$ and restore the arbitrary noise variance $\nu$. The recurrence is
\begin{equation}
  \begin{aligned}
    G_{k+1}     & = (1 - u)^2G_k + u^2bH_k + u^2\nu,       \\
    H_{k+1}     & = D_b(\alpha)H_k + b\alpha^2(G_k + \nu), \\
    D_b(\alpha) & = 1 - b\alpha(2 - \alpha + 3b\alpha),
  \end{aligned}
\end{equation}
with $u = b\alpha$ for S and $u = \alpha/4$ for U. Initially
$(G_0,H_0) = (s/4,r)$, and $R_{R,k} = (G_k + wH_k)/2$. All coefficients are
nonnegative, since $D_b = (1 - b\alpha)^2 + b\alpha^2(1 - 4b) \geq 0$.

For S, let $W_k = G_k + H_k$. Summing the two equations gives
\[
  W_{k+1} \geq \lambda_b(\alpha)W_k + b\alpha^2\nu, \qquad \lambda_b(\alpha)
          =    1 - 2b\alpha + b\alpha^2(1 - 3b + b^2).
\]
The coefficient of $H_k$ is exactly $\lambda_b$, and that of $G_k$ exceeds it by
$b^2\alpha^2(4 - b)$. The exact noise term is $b\alpha^2(1 + b)\nu$. Also
$\lambda_b \geq 1 - b/(1 - 3b + b^2) > 0$ for $0 < b \leq 1/4$. Consequently,
\[
  R^S_{R,k}(\alpha) \geq \frac w2\left[ \lambda_b^kv + b\alpha^2\nu\sum_{j=0}^{k-1}\lambda_b^j \right].
\]
In particular, for $\alpha \geq c_S(b) := 2/(1 - 3b + b^2)$, this risk is at
least $wv/2$ at every stop, including initialization.

For U, positivity gives the analogous bound
\[
  R^U_{R,k}(\alpha) \geq \frac w2\left[ D_b^kr + b\alpha^2\nu\sum_{j=0}^{k-1}D_b^j \right].
\]
If $\alpha \geq c_U(b) := 2/(1 - 3b)$, then $D_b \geq 1$ and the risk is at
least $wr/2$. Empty sums cover $k = 0$. These bounds require no stability
assumption and hold even when rates diverge or stopping times are chosen to
offset growth in the moments. For sufficiently small $q$, both cutoffs are below
three and $1 - w \leq 4b$.

\subsection{Uniform approximation for standardization}

For $\alpha \in[0,3]$, the affine transition for $(G,H,1)$ under S is
$I + b\mathcal L_\alpha + O(b^2)$ uniformly, where
\[
  \mathcal L_\alpha = \begin{pmatrix}
                        -2\alpha & 0          & 0           \\
                        \alpha^2 & -d(\alpha) & \alpha^2\nu \\
                        0        & 0          & 0
                      \end{pmatrix}.
\]
Its difference from $e^{b\mathcal L_\alpha}$ is $O(b^2)$, and both operators
have norm at most $1 + O(b)$. Telescoping their powers over $k \leq T_1/b$ steps
therefore gives an $O(b)$ error uniformly in rate and stop. At time $t = kb$,
the limiting moments solve
\[
  \dot G = -2\alpha G,\qquad \dot H = -d(\alpha)H + \alpha^2(G + \nu).
\]
Their sum solves
\[
  \dot W = -d(\alpha)W + \alpha^2\nu,\qquad W(0) = v.
\]
Define $\Psi_S(\alpha,t) = W(t)/2$. The risk weights differ from their limits by
$O(b)$, so
\begin{equation}
  R^S_{R,k}(\alpha) = \Psi_S(\alpha,kb) + O(b) \label{eq:limit-s-uniform}
\end{equation}
uniformly over the retained rates and all allowed stops. For $0 < \alpha < 2$,
solving the scalar equation gives $F_S$ with budget $t$. The flow also defines
the endpoints smoothly: $\Psi_S(0,t) = v/2$ and $\Psi_S(2,t) = (v + 4\nu t)/2$.
A vanishing SD rate retains both initial moments uniformly over all stops.

\subsection{Uniform approximation with the unscaled transient}

For U, retain $0 \leq \alpha \leq c_U(b)$ and write
\[
  A = (1 - \alpha/4)^2,\qquad h
    = \frac{\alpha}{8 - \alpha},\qquad g
    = \nu h,\qquad c
    = bh, \qquad B
    = \frac{r + T\alpha^2(G_0 + g + \nu)}{1 - T\alpha^2c}.
\]
Here $0 \leq D_b \leq 1$, $h \leq 3/5$, and $T\alpha^2c \leq 27T_1b/5$. The
denominator is at least $1/2$ for small $b$. With
$H_{\max} = \max_{j \leq M_q}H_j$, summing the common recurrence gives
\[
  0 \leq G_j - A^jG_0 - (1 - A^j)g \leq cH_{\max}.
\]
Using $G_j \leq G_0 + g + cH_{\max}$ in the rare recurrence then gives
\[
  H_{\max} \leq r + T\alpha^2(G_0 + g + \nu + cH_{\max}).
\]
Absorbing the last term yields
\[
  H_{\max} \leq B
           \leq 2\{r + 9T_1(G_0 + 8\nu/5)\},\qquad 0
           \leq G_j - A^jG_0 - (1 - A^j)g
           \leq cB.
\]

Let $\widetilde H_0 = r$ and
$\widetilde H_{j+1} = D_b\widetilde H_j + b\alpha^2(\nu + g)$. Since
$D_b \leq 1$,
\begin{align*}
  |H_k - \widetilde H_k| & \leq b\alpha^2\sum_{j<k}|G_j - g|            \\
                         & \leq b\{16h|G_0 - g| + T\alpha^2hB\} = O(b).
\end{align*}
For $\alpha > 0$, the geometric sum uses $\alpha^2/(1 - A) = 16h$. At
$\alpha = 0$, both sides are zero. This cancellation removes the apparent
singularity in the common-coordinate relaxation time.

Let $H^\circ_\alpha$ solve
\[
  \dot H^\circ = -d(\alpha)H^\circ + \alpha^2(\nu + g), \qquad H^\circ(0) = r.
\]
The scalar affine map for $\widetilde H$ differs from this flow over time $b$ by
$O(b^2)$ uniformly on the retained rate interval. The flow is uniformly bounded
for $t \leq T_1$, and errors propagate with multiplier $D_b \leq 1$. Telescoping
gives $\widetilde H_k = H^\circ_\alpha(kb) + O(b)$. Hence
\begin{equation}
  R^U_{R,k}(\alpha) = \frac12\{A^kG_0 + (1 - A^k)g + H^\circ_\alpha(kb)\}
                      + O(b) \label{eq:limit-u-uniform}
\end{equation}
uniformly over every retained rate and stop. A uniform approximation that
replaces the common term by $g$ would fail, including at initialization.

This approximation also describes sequences approaching the rate or time
boundaries. If $\alpha_q \to 0$, positivity and Bernoulli's inequality give
\[
  r - 2rT\alpha_q \leq H_j
                  \leq B
                  =    r + O(\alpha_q^2),\qquad 0
                  \leq G_j - A^jG_0
                  \leq g + cB
                  =    O(\alpha_q).
\]
If $k_q\alpha_q \to \ell \in[0,\infty]$, the limiting risk is therefore
\begin{equation}
  R^U_{R,k_q}(\alpha_q) \longrightarrow \frac{s}{8}e^{-\ell/2} + \frac r2.
\end{equation}
For finite $\ell$, use $\log A = -\alpha_q/2 + O(\alpha_q^2)$ and
$k_q\alpha_q^2 \to 0$. For infinite $\ell$, use
$A^{k_q} \leq e^{-k_q\alpha_q/2}$. Thus $b \ll \alpha_q \ll 1$ at the full cap
learns the common signal and attains $r/2$.

If instead $\alpha_q \to \alpha_0 \in (0,2]$ and $k_qb \to 0$, the rare moment
tends to $r$. Along subsequences with $k_q = K$ fixed or $k_q \to \infty$, the
common moment tends, respectively, to
$A(\alpha_0)^KG_0 + \{1 - A(\alpha_0)^K\}g(\alpha_0)$ or $g(\alpha_0)$. Both
give risk at least $r/2$. These cases explain why substituting time zero into
the positive-time limit alone would miss some attainable transient risks.

\subsection{Reduction of the limiting trajectories}

Set $\Psi_U(\alpha,t) = \{g(\alpha) + H^\circ_\alpha(t)\}/2$. Both $\Psi_A$ are
smooth through $\alpha = 0,2$ when defined by their differential equations, and
$\Psi_U(0,t) = r/2$. For $0 < \alpha < 2$, each trajectory is monotone in time.
An increasing S trajectory stays above $v/2$, and an increasing U trajectory
stays above $(g + r)/2 \geq r/2$. The respective baselines are the values at
rate zero in the limiting problems. A decreasing trajectory reaches its minimum
at the full cap $T$.

Rates $\alpha \in (1,2)$ are dominated by $2 - \alpha \in (0,1)$ at the same
time: the decay $d(\alpha)$ is unchanged, while each noise term
\[
  g(\alpha),\qquad \frac{\nu\alpha}{2 - \alpha},\qquad \frac{\alpha\{\nu + g(\alpha)\}}{2 - \alpha}
\]
is nondecreasing in $\alpha$. At rate two the risks are $(v + 4\nu t)/2$ and
$\{g(2) + r + 4(\nu + g(2))t\}/2$, also above the baselines. Consequently,
\begin{equation}
  \Psi_A(\alpha,t) \geq L_A(T), \qquad 0
                   \leq \alpha
                   \leq 2,\quad 0
                   \leq t
                   \leq T. \label{eq:limit-trajectory-reduction}
\end{equation}
On $[0,3] \times[0,T_1]$, both flows and their rate derivatives are bounded, as
follows by differentiating their finite-dimensional matrix exponentials. Since
$c_A(b) - 2 = O(b)$, replacing a retained rate above two by two changes its
limiting risk by only $O(b)$.

\subsection{Lower bounds and recovery over the full oracle domain}

Combining \cref{eq:limit-s-uniform,eq:limit-trajectory-reduction} gives
$R^S_{R,k}(\alpha) \geq L_S(T) - O(b)$ at every retained rate and stop. Above
$c_S(b)$, the exact bound gives $R^S_{R,k} \geq wv/2 \geq L_S(T) - 2bv$. The
same lower bound therefore holds over the entire S oracle domain.

For U, divide the retained rate-and-stop choices according to whether $k\alpha$
reaches $J_b = 2\log(1/b)$. If $k\alpha \geq J_b$, then
$A^k \leq e^{-k\alpha/2} \leq b$, so
\cref{eq:limit-u-uniform,eq:limit-trajectory-reduction} give
$R^U_{R,k}(\alpha) \geq L_U(T) - O(b)$. If $k\alpha < J_b$, take $b$ small
enough that $1 - 2b\alpha \geq 0$. Positivity gives
\[
  H_k \geq D_b^kr \geq (1 - 2b\alpha)^kr \geq r(1 - 2bk\alpha).
\]
It follows that
\[
  R^U_{R,k}(\alpha) \geq \frac r2 - 2br - rbJ_b
                    \geq L_U(T) - 2br - 2rb\log(1/b).
\]
Above $c_U(b)$, the bound $R^U_{R,k} \geq wr/2 \geq L_U(T) - 2br$ suffices. Thus
the U oracle is bounded below by $L_U(T) - O(b\log(1/b))$. These bounds permit
taking infima directly, without requiring finite-prevalence minimizers to exist
or their rates and stopping times to converge.

For matching upper bounds, let $\alpha_A(T) \in[0,1]$ minimize $F_A$. Continuity
ensures existence. For S, use $\alpha_S(T)$ at $k = M_q$;
\cref{eq:limit-s-uniform} gives risk $L_S(T) + O(b)$. For U, use the full cap
and the rate
\[
  \alpha_q = \max\left\{ \alpha_U(T),\frac{2b\log(1/b)}{T} \right\}.
\]
This lies in $[0,1]$ for small $b$ and ensures $M_q\alpha_q \geq J_b$. The
common transient is $O(b)$, and $F_U$ has a bounded rate derivative on
$[0,1] \times[T_0,T_1]$. Hence
\[
  R^U_{R,M_q}(\alpha_q) = F_U(\alpha_q;T) + O(b) \leq L_U(T) + O(b\log(1/b)).
\]
This construction includes zero or vanishing minimizing rates and zero signal or
noise. Together with the lower bounds, it proves \cref{eq:oracle-limit-errors}.

Finally, bounding the time derivatives of the objectives uniformly in rate and
taking minima yields
\[
  |L_S(T) - L_S(\tau)| \leq \frac{v + \nu}{2}|T
                            - \tau|,\qquad |L_U(T)
                            - L_U(\tau)|
                       \leq \frac{r + 8\nu/7}{2}|T - \tau|.
\]
Since $T \to \tau \in (0,\infty)$, both oracle limits follow. Rounding $\tau/b$
to an integer introduces an additional budget discrepancy of only $O(b)$.

\paragraph{Why the positive-budget assumption matters.}

The theorem does not assert uniformity when signals, noise, or budgets diverge.
Nor can we drop the condition $\tau > 0$ without specifying the cap sequence.
For $s = 4,r = 0,\nu = 1$, exact one-step minimization gives
\[
  R_U^*(1) = \frac12 - \frac{1}{4(1 + 16bw)} \longrightarrow \frac14.
\]
If instead $M_q \to \infty$ and $M_qb \to 0$, taking $\alpha_q = M_q^{-1/2}$
gives risk tending to zero by the vanishing-rate bounds. Both sequences have
zero limiting rescaled budget, but different oracle limits.

\section{Variational decisions from the limiting risks}
\label{sec:oracle-variational}

\subsection{Unique optimal rates}

For $\nu > 0$, write $x = v/\nu$, $\rho = r/\nu$, and, for $\alpha \in[0,1]$,
\[
  p = \frac{\alpha}{2 - \alpha},\qquad h
    = \frac{\alpha}{8 - \alpha},\qquad P
    = p(1 + h),\qquad z
    = e^{-\tau d(\alpha)}.
\]
Then $L_S = \nu J_S/2$ and $L_U = \nu J_U/2$, where
\begin{equation}
  J_S(x,\tau)
  = \min_{[0,1]}\{xz + p(1 - z)\},\qquad J_U(\rho,\tau)
  = \min_{[0,1]}\{\rho z + h + P(1 - z)\}. \label{eq:dimensionless-oracles}
\end{equation}
Only the two signal-to-noise ratios and the rescaled budget determine the
preferred scaling. Multiplying $s,r,\nu$ by the same positive number scales both
risks by that number and leaves the optimal rates unchanged.

On $0 < \alpha < 1$, define
\[
  Q_S = p + \frac{p'(e^{\tau d} - 1)}{\tau d'},\qquad Q_U
      = P + \frac{h'e^{\tau d} + P'(e^{\tau d} - 1)}{\tau d'}, \qquad d'
      = 2(1 - \alpha).
\]
Primes denote rate derivatives. Both functions increase strictly: $p,h,P$ have
positive first and second derivatives, and $e^{\tau d}$ and $1/d'$ increase on
this interval. Their limits at the endpoints are
\[
  Q_S(0) = 0,\qquad Q_U(0) = \frac{1}{16\tau},\qquad Q_S(1-) = Q_U(1-) = \infty.
\]
The derivatives of the respective objectives in \cref{eq:dimensionless-oracles}
are $\tau d'z(Q_S - x)$ and $\tau d'z(Q_U - \rho)$. Thus, if $v,\nu > 0$, S has
one minimizer in $(0,1)$, given by $Q_S = x$. U has one minimizer in $(0,1)$
when $r\tau > \nu/16$, given by $Q_U = \rho$, and otherwise its unique minimizer
is zero, including at equality. If $v = 0$, S also minimizes uniquely at zero
for $\nu > 0$. If $\nu = 0$, choosing rate one gives $L_S = ve^{-\tau}/2$ and
$L_U = re^{-\tau}/2$; a zero signal makes the corresponding objective
identically zero.

The numerical solver in the source supplement brackets these roots and
multiplies the derivative by $z$ to avoid evaluating growing exponentials. It
evaluates the proved characterization in floating-point arithmetic; its answers
are not interval certificates.

\subsection{Allocation boundaries}

\Cref{cor:oracle-allocation} holds the total initial signal contribution $v$
fixed. Its boundary can be written
\begin{equation}
  \theta_* = \frac1x\max_{0 \leq \alpha \leq 1} \left[ \{J_S(x,\tau) - h - P(1 - z)\}e^{\tau d} \right].
\end{equation}
For each rate, the bracketed expression is the rare signal-to-noise ratio at
which that U trajectory equals $J_S$. U wins if at least one trajectory lies
below $J_S$, which explains the maximum. The bounds on the slope used in the
corollary also give
\[
  \frac{e^{-\tau}}2(r_2 - r_1) \leq L_U(r_2) - L_U(r_1) \leq \frac12(r_2 - r_1).
\]

An equivalent comparison fixes $r,\nu,\tau > 0$ and varies $s$. U's limiting
risk is constant, while
\[
  \frac{e^{-\tau}}8(s_2 - s_1) \leq L_S(s_2) - L_S(s_1) \leq \frac18(s_2 - s_1).
\]
At $s = 0$, S wins by the same endpoint argument as in
\cref{cor:oracle-allocation}; its risk then grows without bound as $s$
increases. There is therefore exactly one $s_* > 0$ giving equality, with S
preferred below it and U above it. Its variational expression and a bracket are
\[
  s_*
  =    4\nu\max_{0 \leq \alpha \leq 1} \left[ \{J_U(\rho,\tau) - p\}e^{\tau d} + p - \rho \right], \qquad 0
  <    s_*
  \leq 4r(e^\tau - 1).
\]
The bracket follows from $L_S \geq ve^{-\tau}/2$ and $L_U \leq r/2$. Either
boundary can also be computed by scalar bisection of the optimized risk
difference. Uniqueness does not ensure good conditioning when $\tau$ is large.
These results concern the stated allocation paths and do not assert a unique
transition under every constraint on signal strength.

\subsection{Noise and budget regimes}
\label{sec:oracle-noise-budget}

For fixed signals and budget, both limiting risks are nondecreasing and concave
in $\nu$, because they are minima of affine functions with nonnegative noise
coefficients. Their difference need not be monotone. If $s > 0$, zero noise
gives $L_S - L_U = se^{-\tau}/8 > 0$, so sufficiently small positive noise also
favors U. Once $\nu \geq 16r\tau$, U has risk $r/2$. As $\nu \to \infty$, the
optimal S rate must tend to zero, since rates bounded away from zero have
unbounded noise cost. Its signal term tends to $v/2$, while initialization
provides the matching upper bound. Hence
\[
  L_S \longrightarrow v/2,\qquad L_U
      \longrightarrow r/2, \qquad L_S - L_U
      \longrightarrow s/8.
\]
U also wins at sufficiently large noise when $s > 0$. For positive common
signals in the rare-dominant family of \cref{thm:conference-predictive}, S wins
at noise variance $\nu = r$ and $\tau \in[1,4]$. Continuity then implies at
least two noise crossings. It does not imply exactly two, or crossings for every
signal pair. \Cref{tab:oracle-noise-regimes} illustrates these changes in
preferred scaling for $s = 0.1$, $r = 1$, and $\tau = 2$.

\begin{table}[htbp]
  \centering
  \begin{tabular}{rrrr}
    \toprule
    $\nu$ &    $L_U$ &    $L_S$ & Preferred scaling \\
    \midrule
        0 & 0.067668 & 0.069359 &                 U \\
      0.1 & 0.109877 & 0.103572 &                 S \\
        1 & 0.256267 & 0.232599 &                 S \\
      100 & 0.500000 & 0.502306 &                 U \\
    \bottomrule
  \end{tabular}
  \caption{
    Limiting oracle risks across noise variances for $s = 0.1$, $r = 1$, and
    $\tau = 2$. The preferred scaling has the smaller risk: U at the smallest
    and largest noise variances shown, and S at the intermediate values.
  }
  \label{tab:oracle-noise-regimes}
\end{table}

Each limiting oracle risk is nonincreasing in $\tau$: the trajectory reduction
allows the earlier stopping choice under every larger cap. For fixed $v,\nu > 0$
and $\tau \downarrow 0$, expanding uniformly on $\alpha \in[0,1]$ gives
\[
  L_S =               \frac v2
                      - \frac{\tau v^2}{2(v + \nu)}
                      + O(\tau^2),\qquad \alpha_S
      \longrightarrow \frac{v}{v + \nu}.
\]
The first-order reduction in twice the risk is
$\tau\{2\alpha v - \alpha^2(v + \nu)\}$, a strictly concave quadratic. For
sufficiently small $\tau$, U has risk $r/2$ by its endpoint threshold. Thus
small positive rescaled budgets favor U when $s > 0$. This statement takes the
prevalence limit first; the common coordinate has already been learned.

For fixed $r,\nu > 0$ and finite $s \geq 0$, the large-budget expansions are
\begin{equation}
  \begin{aligned}
    L_S & = \frac{\nu}{8\tau} \left[ 1 + \log\!\left( \frac{4\tau v}{\nu} \right) \right] + o(\tau^{-1}),     \\
    L_U & = \frac{5\nu}{32\tau} \left[ 1 + \log\!\left( \frac{16\tau r}{5\nu} \right) \right] + o(\tau^{-1}).
  \end{aligned} \label{eq:large-budget-oracles}
\end{equation}
To justify them, the candidate $\alpha = (\log\tau)/(2\tau)$ gives risk
$O((\log\tau)/\tau)$. At any minimizer, the positive signal term forces
$z = O((\log\tau)/\tau)$. Under S, the noise term $\nu p(1 - z)$ then bounds
$\alpha$ by the same order; under U, $\nu h$ does so directly. Thus
$\tau\alpha^2 \to 0$. Uniformly in this minimizing range, twice the risks are
\[
  ve^{-2\tau\alpha}
  + \frac{\nu\alpha}{2}
  + o(\tau^{-1}),\qquad re^{-2\tau\alpha}
  + \frac{5\nu\alpha}{8}
  + o(\tau^{-1}).
\]
The remainders are $O((\log\tau)^3/\tau^2)$ at both the exact minimizers and the
minimizers of these approximations. Minimizing $Be^{-2\tau\alpha} + c\alpha$
gives $\alpha = \log(2\tau B/c)/(2\tau)$ for large $\tau$, proving
\cref{eq:large-budget-oracles}. U pays $\nu\alpha/8$ in common-coordinate noise
in addition to $\nu\alpha/2$ in rare-coordinate noise, while S has only the
latter at leading order. Consequently $L_S/L_U \to 4/5$.

For $s > 0$, the small- and large-budget signs imply at least one budget
crossing, without asserting uniqueness. At $s = r = \nu = 1$, numerical risks
favor U at $\tau = 1$ and S at $\tau = 10$. Convergence can be slow: for
$s = 10,r = 0.1,\nu = 2$, U still wins at $\tau = 10^5$, while S wins at
$\tau = 10^7$. The assumptions matter: with $\nu = 0$, U is always weakly
better; with $r = 0$, its limiting risk is zero; with $s = 0,r,\nu > 0$, S wins
at every finite positive budget.

The deterministic checks in the source supplement test the moment
approximations, bounds on unrestricted infima, scalar roots, boundaries, and
expansions. They supplement the proofs and are separate from the
finite-prevalence certificates.

\section{One rare feature and several common features}
\label{sec:common-block-details}

\subsection{Moment reduction}

Let $I_1,\ldots,I_m$ be independent Bernoulli variables with prevalence $1/2$,
and let $J$ be an independent Bernoulli variable with prevalence $q$, where
$m \geq 1$ and $0 < q < 1/2$. With population-centered features, the response is
\[
  y = \sum_{j=1}^m\beta_j^*(I_j - 1/2)
      + \beta_R^*(J - q)
      + \varepsilon, \qquad \E(\varepsilon \mid x)
    = 0, \qquad \E(\varepsilon^2 \mid x)
    = \sigma^2.
\]
Training uses fresh observations, zero initialization, and a fixed zero
intercept. On the original coefficient scale, the rare-coordinate step is
$\alpha$ and every common-coordinate step is $4u$, with $u = \alpha/4$ for U or
$u = b\alpha$ for S. Here $b = q(1 - q)$, as before. Define
$s = \sum_{j=1}^m(\beta_j^*)^2$, $r = (\beta_R^*)^2$, and
\[
  V_{j,k} = \E[(\beta_{j,k} - \beta_j^*)^2],\qquad G_k
          = \frac14\sum_{j=1}^m V_{j,k},\qquad H_k
          = \E[(\beta_{R,k} - \beta_R^*)^2].
\]

The combined moments satisfy
\begin{equation}
  \begin{aligned}
    G_{k+1} & = (1 - 2u + mu^2)G_k + mu^2(bH_k + \sigma^2), \\
    H_{k+1} & = D_bH_k + b\alpha^2(G_k + \sigma^2),
  \end{aligned} \label{eq:common-block-moments}
\end{equation}
with $(G_0,H_0) = (s/4,r)$ and $D_b = 1 - b\alpha(2 - \alpha + 3b\alpha)$. The
factors of $m$ show how additional common features increase the noise cost, even
when their true coefficients are zero.

Expanding the squared SGD update for a common coefficient gives
\[
  V_{j,k+1} = (1 - u)^2V_{j,k}
              + u^2\sum_{\substack{\ell=1\\\ell\ne j}}^m V_{\ell,k}
              + 4u^2bH_k
              + 4u^2\sigma^2.
\]
The common feature has second moment $1/4$ and fourth moment $1/16$.
Independence and centering remove terms involving two distinct coefficient
errors from this diagonal equation, even when those errors are correlated.
Summing over $j$ and dividing by four gives the first line of
\cref{eq:common-block-moments}. The rare equation follows from
$\E[(J - q)^4] = b(1 - 3b)$ and the same expansion. Conditioning the test
distribution on $J = 1$ leaves the common features centered and independent, so
the conditional second-moment matrix is diagonal and
$R_{R,k} = \{G_k + (1 - q)^2H_k\}/2$.

The recurrence depends on the common coefficients only through $s$. Thus the
expected rare risk is unchanged if the same squared signal is concentrated in
one common feature or spread across the block. Coefficient signs also have no
effect on this risk. Setting $m = 1$ recovers \cref{eq:predictive-moments}, with
$G_k = C_k/4$ and with the observation-noise terms multiplied by $\sigma^2$.

\subsection{Comparison across dimensions}

For fixed $q,s,r,\sigma^2$, write $R_A^*(M;m)$ for the infimum of rare risk
under scaling $A$ over every nonnegative constant rate and deterministic stop
$k \in \{0,\ldots,M\}$ in the model with $m$ common features.

\begin{proposition}[Increasing the common-feature dimension]
  For either scaling $A \in \{U,S\}$ and integers $m_2 \geq m_1 \geq 1$,
  \[
    R_A^*(M;m_2) \geq R_A^*(M;m_1).
  \]
  The same inequality holds at each fixed nonnegative rare-coordinate step
  $\alpha$ and stop $k$.
\end{proposition}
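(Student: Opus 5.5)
The plan is to prove the pointwise statement at each fixed rare-coordinate step $\alpha\ge0$ and stop $k$, and then take infima. The oracle inequality follows from the pointwise one. For fixed $A$, $\alpha$, $k$, the rare risk is $R_{R,k}=\{G_k+(1-q)^2H_k\}/2$. Here $(G_k,H_k)$ solves \cref{eq:common-block-moments} with initial value $(s/4,r)$. That initial value does not depend on $m$. The per-coordinate step $u$ also does not depend on $m$, since $u=\alpha/4$ or $u=b\alpha$. So $m$ enters only through the coefficients $1-2u+mu^2$, $mu^2b$ and $mu^2\sigma^2$. Write $(G_k^{(i)},H_k^{(i)})$ for the trajectory with $m=m_i$. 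It suffices to show $G_k^{(2)}\ge G_k^{(1)}$ and $H_k^{(2)}\ge H_k^{(1)}$ for every $k$. Then $R_{R,k}^A(\alpha;m_2)\ge R_{R,k}^A(\alpha;m_1)$ for every admissible pair, and taking the infimum over $\alpha\ge0$ and $k\in\{0,\dots,M\}$ on both sides gives $R_A^*(M;m_2)\ge R_A^*(M;m_1)$.

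The comparison is by induction on $k$, using monotonicity of the affine recurrence map. The base case $k=0$ holds with equality. For the inductive step, consider the map
\[
\Phi_m(G,H)=\bigl((1-2u+mu^2)G+mu^2(bH+\sigma^2),\; D_bH+b\alpha^2(G+\sigma^2)\bigr).
\]
I need two facts. First, $\Phi_m$ is nondecreasing in $(G,H)$ componentwise on the nonnegative orthant. Second, $\Phi_{m_2}(G,H)\ge\Phi_{m_1}(G,H)$ for $G,H\ge0$. The second fact is immediate: the $H$-component does not depend on $m$, and the $G$-component increases in $m$ by $(m_2-m_1)u^2(G+bH+\sigma^2)\ge0$. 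The induction hypothesis together with nonnegativity of the moments then gives
\[
(G_{k+1}^{(2)},H_{k+1}^{(2)})=\Phi_{m_2}(G_k^{(2)},H_k^{(2)})\ge\Phi_{m_2}(G_k^{(1)},H_k^{(1)})\ge\Phi_{m_1}(G_k^{(1)},H_k^{(1)}).
\]
The final expression equals $(G_{k+1}^{(1)},H_{k+1}^{(1)})$. Nonnegativity of $G_k,H_k$ holds because they are expected squared errors.

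The main obstacle is the first fact, namely nonnegativity of every coefficient for every $\alpha\ge0$, including unstable rates, since the oracle allows them. The off-diagonal coefficients $mu^2b$ and $b\alpha^2$ are clearly nonnegative. For $D_b$, the paper already gives $D_b=(1-b\alpha)^2+b\alpha^2(1-4b)\ge0$, using $b\le1/4$. The delicate coefficient is $1-2u+mu^2$, since a negative value would break monotonicity in $G$. For $m\ge1$ it is at least $(1-u)^2\ge0$, because $1-2u+mu^2=(1-u)^2+(m-1)u^2$. It also increases in $m$, which handles the diagonal part of the $m$-comparison in the same step. With these facts the induction closes, and the argument needs neither stability nor attainment of the infimum.
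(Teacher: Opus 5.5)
Your proof is correct and follows essentially the same route as the paper. Both arguments fix $\alpha$ and $k$ and note that $u$ and the initial value $(s/4,r)$ do not depend on $m$. They then use nonnegativity of every coefficient, via $1-2u+mu^2=(1-u)^2+(m-1)u^2$ and $D_b=(1-b\alpha)^2+b\alpha^2(1-4b)$, together with monotonicity of the common-equation coefficients in $m$, to get coordinatewise larger moments by induction, and finally take infima over the same rates and stops. Your two-step chain through $\Phi_{m_2}$ simply spells out the induction that the paper states in one line.
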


\begin{proof}
  At fixed $\alpha$, the value of $u$ does not depend on $m$. Every coefficient
  in \cref{eq:common-block-moments} is nonnegative. In particular,
  \[
    1 - 2u + mu^2 =    (1 - u)^2 + (m - 1)u^2
                  \geq 0, \qquad D_b
                  =    (1 - b\alpha)^2 + b\alpha^2(1 - 4b)
                  \geq 0.
  \]
  Increasing $m$ increases the coefficients and noise term in the common
  equation and leaves the rare equation unchanged. Both systems start at
  $(s/4,r)$, so induction gives coordinatewise larger moments at every stop. The
  risk weights are nonnegative and independent of $m$, proving the fixed-rate
  comparison. Taking the infimum over the same rates and stops proves the oracle
  comparison.
\end{proof}

The proposition applies separately to each method. It does not imply that their
risk difference is monotone in $m$, or that either ordering from the two-feature
theorem persists.

\subsection{Numerical comparisons}
\label{sec:common-block-numerics}

Both orderings persist in all 126 numerical comparisons over seven dimensions,
three prevalences, three caps, and the two signal allocations. Each scaling
searches its own finite grid of stable rates and every integer stop within the
cap. \Cref{fig:common-block} shows the slice with roughly two expected rare
observations. When the rare signal dominates and $q = 10^{-4}$, increasing $m$
from one to 64 raises U's selected risk from $0.256$ to $0.500$, while SD's
changes from $0.229$ to $0.230$. When the common signal dominates, U retains its
advantage, though that advantage narrows with dimension, especially at the
larger prevalence. These comparisons extend the numerical evidence to more
features; they do not extend the uniform gaps in
\cref{thm:conference-predictive}.

\begin{figure}[t]
  \centering
  \includegraphics{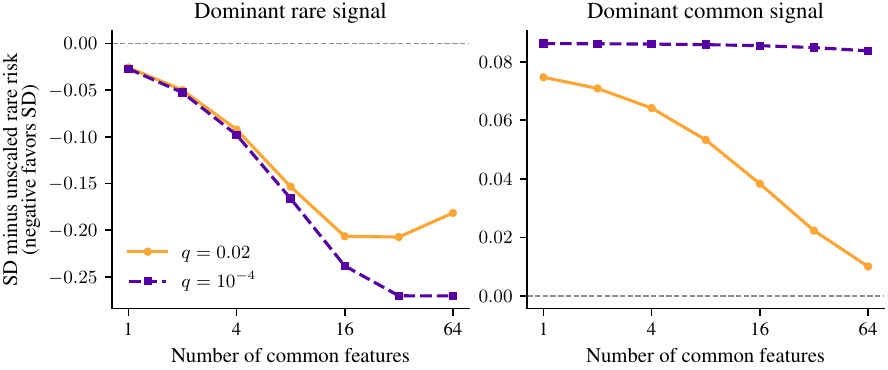}
  \caption{
    Additional common features change the cost of normalization at fixed signal
    strength. The sums of squared common and rare coefficients are
    $(s,r) = (0.01,1)$ (left) or $(1,0.01)$ (right), with noise variance one.
    Caps are $M = 102$ at $q = 0.02$ and $M = 20{,}002$ at $q = 10^{-4}$, giving
    $Mq(1 - q) \approx 2$. Curves show SD-minus-U rare risk after separate
    searches over finite stable-rate grids and all integer stops; negative
    values favor SD\@. Each panel has its own vertical scale.
    \Cref{sec:common-block-numerics} gives all settings and checks.
  }
  \label{fig:common-block}
\end{figure}

We use $m \in \{1,2,4,8,16,32,64\}$, $q \in \{0.02,0.005,10^{-4}\}$,
$\sigma^2 = 1$, and $(s,r) \in \{(0.01,1),\allowbreak(1,0.01)\}$. The
observation caps are
\[
  \begin{array}{c|rrr}
       q    &     \multicolumn{3}{c}{M}      \\
    \hline
     0.02   &       52 &      102 &      204 \\
     0.005  &      202 &      402 &      804 \\
    10^{-4} & 10{,}002 & 20{,}002 & 40{,}004
  \end{array}
\]
These caps give values of $Mq(1 - q)$ close to $1,2,4$, all within $[1,4]$. The
initial values $(G_0,H_0) = (s/4,r)$ are the same across dimensions. The study
uses exact expected moments evaluated in floating point; it does not simulate
training or estimate risks from a test sample.

Each scaling uses zero rate and 1,025 positive rates geometrically spaced
between $10^{-7}$ and $0.999$ times its mean-square stability limit. To compute
that limit, write $u = c\alpha$, where $c = 1/4$ for U or $c = b$ for S, and let
$T$ be the homogeneous transition in \cref{eq:common-block-moments}. Then
\[
  \det(I - T) = bc\alpha^2(4 - L\alpha + K\alpha^2),\qquad L
              = 2(1 - 3b + mc),\qquad K
              = mc(1 - 4b).
\]
The first positive root gives
\[
  \alpha_{\mathrm{stab}} = \frac{8}{L + \sqrt{L^2 - 16K}}.
\]
The search includes every integer stop from zero through each cap. It selects
the rate and stop that minimize expected rare risk, before any training
realization is observed. This finite stable-rate search is narrower than the
unrestricted oracle in \cref{thm:conference-predictive}.

Both signal orderings persist in all 126 paired comparisons. The smallest
selected gaps are $0.01959$ in favor of SD for the dominant rare signal and
$0.00633$ in favor of U for the dominant common signal. A nested coarse search
retains alternate positive rates and zero, giving 513 positive rates. Refining
this grid changes individual selected risks by at most $3.50 \times 10^{-5}$ and
preserves every ordering. No selected rate at a positive stop lies at either
endpoint of the positive rate grid. These checks measure sensitivity to the
tested grid; they do not certify its distance from the continuous optimum.

\begin{figure}[tp]
  \centering
  \includegraphics{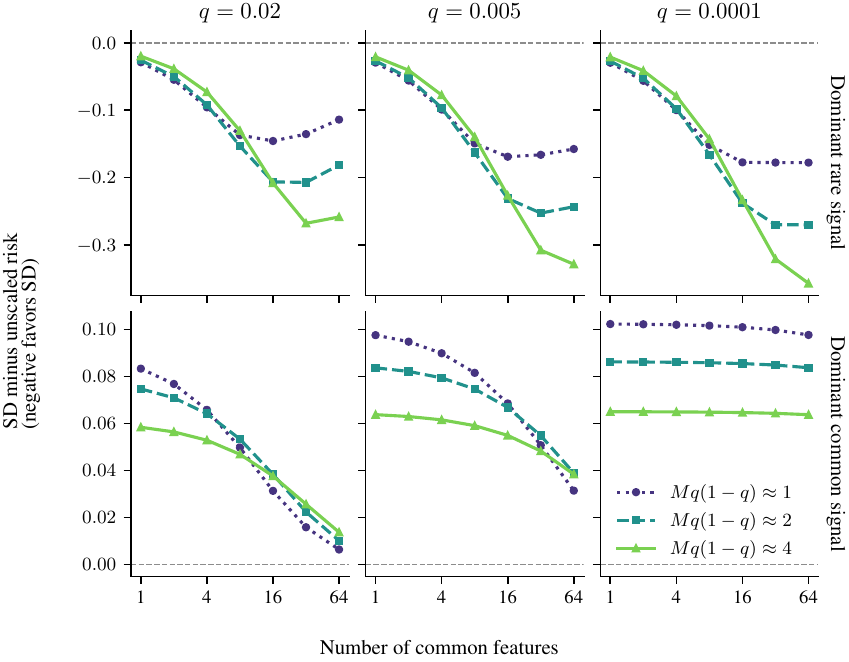}
  \caption{
    Dimension comparisons across all three prevalences and budgets. The top row
    uses $(s,r) = (0.01,1)$ and the bottom row uses $(1,0.01)$, holding total
    squared signal fixed within each row. Negative SD-minus-U rare risk favors
    standardization. Each scaling searches its own finite grid of stable rates
    and every integer stop through the cap. The rows have different vertical
    scales. At the larger prevalences, the size of SD's rare-signal advantage
    can decrease after initially increasing with dimension.
  }
  \label{fig:common-block-full}
\end{figure}

\Cref{fig:common-block-full} shows all slices. At $q = 0.02$ and $M = 102$,
increasing $m$ from one to 64 raises U's common-signal risk from $0.01578$ to
$0.09894$, while SD's rises from $0.09049$ to $0.10901$. Thus U's advantage
narrows as both risks increase. At $q = 10^{-4}$, SD's rare-signal risk changes
little over the tested dimensions, while U's increases substantially, as shown
in \cref{fig:common-block}.

An independent checker verifies 60 one-step reductions by enumerating individual
observations with rational arithmetic in dimensions two through six. These cases
include nonzero cross moments of the coefficient errors and rates above the
stability range. Further checks compare full second-moment tensor powers,
concentrated and spread common signals, and a small rate-and-stop search with
independent observation-level calculations. All 252 selected risks are replayed
with matrix powers rather than sequential updates; the largest discrepancy is
less than $8 \times 10^{-13}$.

The source supplement retains each rate's lowest risk and the earliest stop that
attains it at every cap, together with the grids, selected rates and stops, and
refinement diagnostics. The reproduction guide explains how to regenerate the
study and verify the selected results.

\section{Numerical illustrations}
\label{sec:predictive-intro-figure}

\subsection{Oracle risk and its components}

\Cref{fig:conference-tuned-intro} illustrates the two signal families in
\cref{thm:conference-predictive} at $q = 10^{-4}$ and $\sigma^2 = 1$. We set
$(\beta_C^*,\beta_R^*) = (0.1,1)$ for the rare-signal panel and $(1,0.1)$ for
the common-signal panel. All other assumptions match the theorem. We evaluate 61
integer observation caps, spaced evenly before rounding from $10{,}002$ to
$40{,}004$, so $1 \leq Mq(1 - q) \leq 4$. The horizontal axis reports the
expected number of rare observations at the cap, $Mq$, rather than a realized
count.

For each rule, we use the parameterization in \cref{eq:conference-oracle-rates}
and a grid consisting of zero and 4097 geometrically spaced positive values of
$\alpha$ from $10^{-6}$ to $2$. We iterate the exact
recurrence~\labelcref{eq:predictive-moments}, including both sources of training
variance. At each cap, we select the rate and integer stopping time with the
lowest population rare risk, searching every stop from zero through the cap. The
curves connect these numerical minima; they are not trajectories of a single
trained model. Selection uses population risk averaged over training randomness,
rather than validation observations.

We repeat the search with every second positive grid point and check all 244
selected endpoints against a separate full tensor moment operator constructed by
enumerating the four binary observations. The reproduction script requires the
maximum risk change on grid refinement to be below $2 \times 10^{-6}$ and the
maximum tensor replay discrepancy to be below $10^{-10}$. All selected rates are
interior to the grid. These checks assess numerical accuracy within the stated
search; the theorem supplies the separate comparison over all nonnegative
constant rates. The figure illustrates an existing theoretical result and is not
a prospective evaluation of its predictions.

\Cref{fig:tuned-decomposition} uses the same selections at cap $M = 20{,}503$.
We compute the squared mean error, noiseless feature-sampling covariance, and
observation-noise covariance with the full moment recurrence, then weight each
coordinate by its contribution to rare risk. The saved summary retains these
components for every selected endpoint. We check each component against the
separately enumerated tensor operator and require its discrepancy, and that
between the component sum and the selected risk, to be below $10^{-10}$.

\subsubsection{Variance decomposition}
\label{sec:variance-decomposition}

Write $D = \eta P$ and $C = \E(xx^\top)$. To prove
\cref{eq:conference-decomposition}, we run the noiseless recursion on the same
feature stream. Its mean error is $e_k = -(I - DC)^k\beta^*$ and its covariance
is $A_k \succeq 0$. Conditional on all sampled features, the component due to
noise has mean zero, so its cross moment with the noiseless component vanishes.
Its covariance is $\sigma^2N_k \succeq 0$, which proves the decomposition.
Equivalently, if
$\mathcal T_D(V) = \E\bigl((I - Dxx^\top)V(I - xx^\top D)\bigr)$, then
\[
  V_k = \mathcal T_D^k(\beta^*\beta^{*\top})
        + \sigma^2\sum_{j=0}^{k-1}\mathcal T_D^j(DCD).
\]
The first term is the noiseless second moment $e_ke_k^\top + A_k$, and the
second is the observation-noise covariance $\sigma^2N_k$.

\subsection{Finite-prevalence comparisons with the limiting risks}
\label{sec:oracle-finite}

We compare \cref{thm:oracle-limits,cor:oracle-allocation} with exact moments
under the same population model. The saved study contains 43 combinations of
signal, noise, and target budget at six prevalences from $0.04$ to
$2 \times 10^{-6}$. The settings include both signal pairs from the
finite-prevalence theorem at $\tau = 1,2,4$; common signals at 90\%, 100\%, and
110\% of the predicted crossing; eight allocations with $s/4 + r = 1$ and
$\tau = 2$; four noise levels; points around U's rare-learning threshold;
budgets from $0.01$ to $1000$; and zero-signal and zero-noise endpoints. These
are selected slices rather than uniform coverage of the surrounding parameter
ranges. \Cref{fig:oracle-finite} summarizes the comparisons.

Each scaling minimizes expected rare risk over separate finite grids. The fine
rate grid contains 2049 fractions of its mean-square stability limit, combining
1025 linear and 1025 logarithmic points, with zero and positive fractions from
$10^{-10}$ to $1 - 10^{-6}$. Limiting optimal rates are not inserted into the
grid, and rates outside the stable interval are not searched. Stopping grids
contain initialization, every integer through 64, all caps, and rounded
geometric checkpoints up to the largest cap. These grids have 663 to 883
distinct stops, depending on prevalence. The largest cap is $500{,}001{,}001$.
Matrix powers of the affine recurrence evaluate every candidate without Monte
Carlo noise.

Caps round $\tau/b$ upward, except at target $\tau = 4$, where they round
downward to stay within the finite-prevalence theorem's range. Primary errors
use $L_A(Mb)$, and the saved results also retain predictions at the nominal
target. At $q = 0.04$, target $\tau = 0.01$ gives one observation and actual
$Mb = 0.0384$, so the coarse-budget discrepancy includes appreciable rounding.

Coarse grids take every second point used to generate the rate and geometric
stopping grids, retaining the early integer stops and all caps. Separate rate
refinement, stop refinement, and joint refinement assess search resolution. The
largest decrease under joint refinement is $1.362 \times 10^{-5}$. Stop
refinement alone changes no selected risk: every positive-stop selection uses
its cap, and the others use initialization. No selected rate at a positive stop
reaches an edge of the rate grid. These observations do not bound the error
relative to the unrestricted oracle or prove that full-cap training is always
optimal at finite prevalence.

\begin{figure}[t]
  \centering
  \subfigure[Largest absolute risk discrepancy from $L_A(Mb)$ across all 43 settings at each prevalence.]{%
    \includegraphics{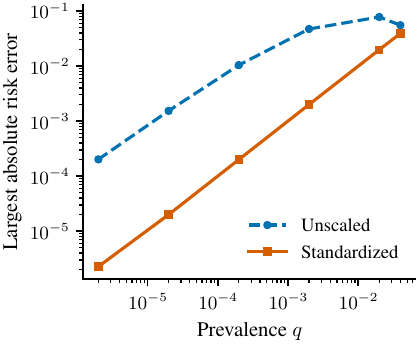}%
  }\hfill
  \subfigure[Allocation at fixed initial limiting subgroup risk,
    $s/4 + r = 1$ and $\tau \approx 2$.]{%
    \includegraphics{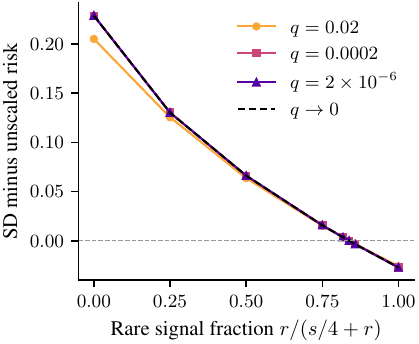}%
  }\par\medskip
  \subfigure[Noise can change the preferred scaling twice,
    with a finite-prevalence disagreement at high noise; $s = 0.1$, $r = 1$,
    and $\tau \approx 2$.]{%
    \includegraphics{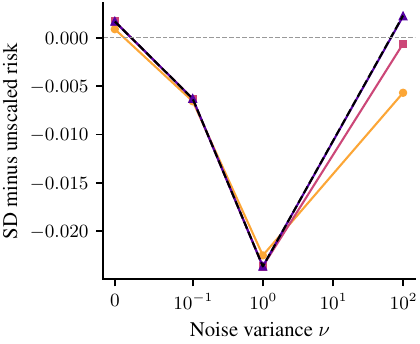}%
  }\hfill
  \subfigure[The budget comparison approaches the large-budget risk ratio $4/5$ obtained after taking the prevalence limit; $s = r = \nu = 1$.]{%
    \includegraphics{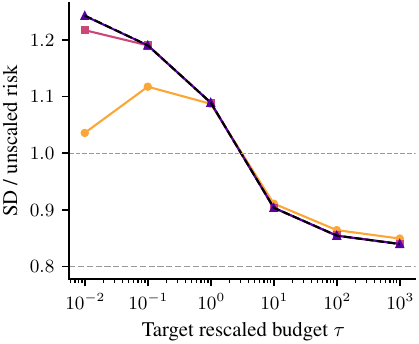}%
  }
  \caption{
    Exact finite-grid selections approach the limiting oracle predictions.
    Colored curves connect finite-grid results; black dashed curves connect
    limiting predictions at the sampled settings. Panels (b)--(d) use nominal
    target budgets, so coarse caps also reflect rounding. Lines do not certify
    intervening crossing locations.
  }
  \label{fig:oracle-finite}
\end{figure}

Both theorem orderings hold in every sampled comparison using the theorem's
signal pairs. At $q = 2 \times 10^{-6}$, the largest errors for those signal
pairs are $2.986 \times 10^{-6}$ for U and $1.535 \times 10^{-6}$ for S. Across
all 43 settings, the corresponding maxima are $2.01631 \times 10^{-4}$ and
$2.27583 \times 10^{-6}$; the largest U error occurs at target $\tau = 0.01$.
Every setting with a nonzero nominal limiting gap has the predicted sign at the
smallest prevalence. The sampled common-signal crossings are bracketed by 90\%
and 110\% of their limiting values, and the allocation slice recovers the
ordering on both sides of $\theta_* \approx 0.83770278$. Cases exactly at
limiting ties remain sensitive to rounding and search resolution and do not
locate a finite-prevalence boundary.

The U endpoint has a concrete finite-prevalence interpretation. At
$\tau = 2,s = 1,r = 0.01,\nu = 1$, the limiting rate is zero and
$L_U = r/2 = 0.005$. The selected finite rate nevertheless remains positive,
falling from $0.07003$ at $q = 0.02$ to $2.866 \times 10^{-5}$ at
$q = 2 \times 10^{-6}$, while the risk falls from $0.01570$ to $0.00500127$.
These rates learn the common signal; literal zero-rate training would retain its
error.

\paragraph{A finite-prevalence disagreement.}

For $s = 0.1,r = 1$, target $\tau = 2$, and $\nu = 100$, the limiting risks
favor U: $L_U = 0.500000$ and $L_S \approx 0.502306$. The finite-grid SD-minus-U
gaps are instead $-0.00568756$ at $q = 0.02$, $-0.00563282$ at $q = 0.002$, and
$-0.000608220$ at $q = 0.0002$. They become positive at $q = 0.00002$ and
$q = 0.000002$, with gaps $0.00180198$ and $0.00223702$. These discrepancies
exceed the observed refinement changes, showing that the limiting decision can
be inaccurate at finite prevalence. They do not certify a reversal of the
unrestricted finite-prevalence oracles.

At $s = r = \nu = 1$ and the smallest prevalence, U wins at target $\tau = 1$,
while S wins at $\tau = 10,100,1000$. At $\tau = 1000$, the finite risk ratio is
$0.839843$ and the limiting ratio is $0.839507$, both still above $4/5$. These
comparisons preserve the distinction between the prevalence limit and the
subsequent large-budget limit.

\paragraph{Reproduction and independent replay.}

The source supplement retains the settings, grids, all four searches, selected
risks, rates, stops, and summaries. An independent checker constructs
observation-level tensor updates and tests all four nested searches on a small
problem. It also replays all 2064 saved selections with full tensor updates and
long-double matrix powers; the recorded maximum discrepancy is
$7.38 \times 10^{-11}$. The accompanying reproduction guide explains how to
regenerate the searches, replay the selections, and plot the saved results.

\subsection{Introductory signal-allocation illustration}
\label{sec:signal-allocation}

\Cref{fig:intro-signal-crossing} uses the allocation path in
\cref{cor:oracle-allocation}: $s = 4(1 - \theta)$ and $r = \theta$, with
$\nu = 1$ and $\tau = 2$. Thus $s/4 + r = 1$ and the limiting subgroup risk at
initialization stays at $1/2$. The middle tick, $\theta = 1/2$, gives equal
common and rare contributions to this risk. We plot the scalar minima in
\cref{eq:limiting-objectives} along the allocation path. The eight allocation
values are $0$, $1/4$, $1/2$, $3/4$, $1$, and $\theta_* - 0.02$, $\theta_*$,
$\theta_* + 0.02$, where scalar root finding gives $\theta_* \approx 0.8377$.
Lines connect these saved values; \cref{cor:oracle-allocation} supplies the
unique-crossing guarantee. Unlike the finite-prevalence curves in
\cref{fig:oracle-finite}(b), this panel uses the limiting risks at exactly
$\tau = 2$, without integer-cap rounding.

The source supplement retains the sampled limiting risks and scalar-solver
checks. The reproduction guide explains how to check and plot these values
without new optimization or training.

\subsection{Paths to a shared least-squares solution}

\Cref{fig:intro-coefficient-paths} uses a constructed fixed dataset of 200
observations. The four binary feature combinations
$(I,J) = (0,0),\allowbreak(0,1),\allowbreak(1,0),\allowbreak(1,1)$ have counts
$98,2,98,2$, respectively. We center the columns by $(1/2,0.02)$, set
$y = x_C + x_R$ without noise, and fix the centered intercept at zero. Thus
$X^\top X/200 = \diag(1/4,0.0196)$ and the unique OLS solution is $(1,1)$. Both
methods initialize their coefficients at zero and use the full gradient of
empirical half squared loss. The unscaled learning rate is $0.4$, and the
standardized learning rate is $0.1$, so both common-coordinate errors contract
by $0.9$ per update.

We mark update 20 for both methods without selecting the stop. Their common
coefficients are both about $0.878$, while their rare coefficients are about
$0.146$ for unscaled training and $0.878$ for standardization. The pale paths
continue through update 2000 toward OLS\@. The reproduction script checks the
paths against the diagonal closed form, decreasing training loss, and
convergence to the common solution. This example illustrates how scaling changes
learning before stopping. The oracle theorem separately compares noisy
fresh-sample SGD after each method chooses its rate and stop; distance to a
sample OLS solution alone does not establish the ordering of oracle risks.

\section{Empirical methods}
\label{sec:empirical-methods}

The controlled study isolates changes to the oracle training model and compares
calibration with adaptive selection. All comparisons pair the scaling rules
within each replication. The protocols below specify which data and candidates
each comparison uses.

\subsection{Controlled departures from the oracle model}
\label{sec:theory-bridge-protocol}

The study in \cref{sec:theory-bridge} uses
$I \sim \operatorname{Bernoulli}(1/2)$ and $J \sim \operatorname{Bernoulli}(q)$
independently, with $q \in \{0.02,0.005\}$. Responses are
$Y = \beta_C^*(I - 1/2) + \beta_R^*(J - q) + \varepsilon$, where
$\varepsilon \sim N(0,1)$ is independent and the coefficient pairs are $(0.1,1)$
and $(1,0.1)$. Each update processes one observation. Caps are $(52,102,204)$ at
$q = 0.02$ and $(202,402,804)$ at $q = 0.005$, all satisfying
$1 \leq Mq(1 - q) \leq 4$. Settings were specified before examining the study's
outcomes; this is an exploratory extension of the existing theorem.

\subsubsection{Training arms and pairing}

\Cref{tab:bridge-arms} specifies the eight arms. The first 256 rows of an
independent feature and noise stream form the fixed training pool. Fresh SGD
uses successive rows of that stream; reuse samples pool indices independently
with replacement, retaining each row's original noise. Common random numbers
pair the two scalings, signal pairs, training arms, and candidate paths within
each replication. Estimated preprocessing uses the entire pool, including rows
beyond the SGD cap when $M < 256$. These arms therefore use additional data for
preprocessing, beyond the observations counted by the update cap.

\begin{table}[ht]
  \centering
  \small
  \begin{tabular}{clll}
    \toprule
    Arm & Sampling & Estimated preprocessing & Intercept                 \\
    \midrule
     1  & Fresh    & None                    & Fixed zero                \\
     2  & Fresh    & None                    & Fitted from zero          \\
     3  & Reuse    & None                    & Fixed zero                \\
     4  & Reuse    & Scales                  & Fixed zero                \\
     5  & Reuse    & None                    & Fitted from zero          \\
     6  & Reuse    & Scales                  & Fitted from zero          \\
     7  & Reuse    & Centering and scales    & Fitted from zero          \\
     8  & Reuse    & Centering and scales    & Fitted from response mean \\
    \bottomrule
  \end{tabular}
  \caption{
    Matched training arms, in the order shown in \cref{fig:bridge-controlled}.
    Centering and scales use population values unless listed as estimated.
    ``None'' denotes no estimated preprocessing. U uses divisor one in every
    arm. Every fitted intercept is unscaled and uses the same learning rate as
    the transformed coefficients.
  }
  \label{tab:bridge-arms}
\end{table}

All feature coefficients are initialized at zero. Empirical variances use
denominator 256, and a zero variance receives divisor one. We do not otherwise
freeze coefficients. Thus empirical centering automatically makes a constant
training column's gradient zero, while population centering can leave it
nonzero. Only arm 8 initializes the fitted intercept at the training response
mean. There are 21 training pools without rare observations at $q = 0.02$ and
1148 at $q = 0.005$, out of 4096 evaluation replications each.

\subsubsection{Candidate family and calibration}

Let $z(i,j)$ denote the transformed feature vector under a given arm and
scaling, augmented by a unit coordinate when fitting an intercept. Define
\[
  L = \max_{(i,j) \in \{0,1\}^2}\|z(i,j)\|^2, \qquad \eta = t/L.
\]
The rate factors are zero and 129 geometrically spaced values from $10^{-4}$ to
$1.9$. Each homogeneous update $I - \eta zz^\top$ is nonexpansive because
$0 \leq \eta\|z\|^2 \leq 1.9 < 2$. This condition rules out expansion by an
individual homogeneous update; it does not bound accumulated response noise or
guarantee good predictions on an unseen subgroup. It is a conservative search
restriction, not a characterization of every stable constant rate. For estimated
preprocessing, $L$ depends on the training pool. Calibration fixes $t$, while
each resulting trajectory retains one constant $\eta$.

Stopping checkpoints are the distinct integers obtained by rounding 65 geometric
points from one to the largest cap, together with zero and all three caps. A
nested coarse grid retains every second positive rate factor and every second
checkpoint, always including zero and the caps. Both scalings receive the same
number of trajectories. At cap $M$, the nominal cost of the search is $130M$
processed observations per arm, scaling, signal, and replication; caps and
selection policies share paths. Preprocessing, validation scoring, and
calibration require additional resources.

For each prevalence, 2048 calibration replications determine the mean exact
population rare risk of every candidate. Each arm, scaling, signal, and cap then
selects a rate factor and eligible stop. The fresh, fixed-intercept arm instead
uses exact expected moments for this choice. An independent set of 4096
evaluation replications assesses the fixed choices. The two stages use distinct
random streams derived from seed 20260924. This procedure estimates the optimum
within a finite candidate family, conditional on the calibration run, rather
than the unrestricted oracle in \cref{thm:conference-predictive}.

\begin{figure}[htbp]
  \centering
  \includegraphics[]{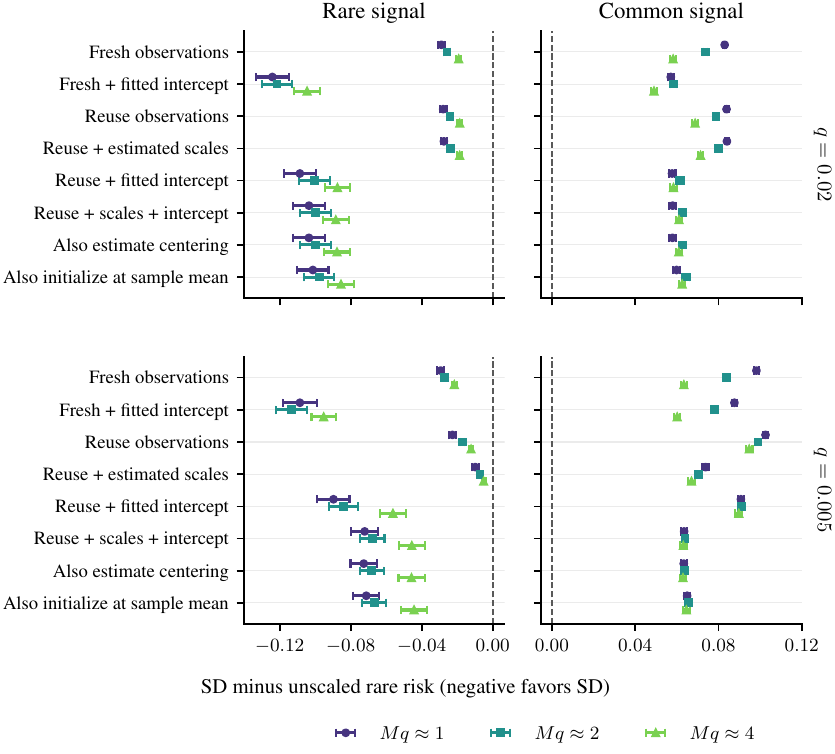}
  \caption{
    Both calibrated risk orderings persist across the eight training arms.
    Negative SD-minus-U rare risk favors SD\@. Colors distinguish the three
    caps; bars show two paired Monte Carlo standard errors over 4096 evaluation
    replications, conditional on the calibration choices. The last two rows
    successively add empirical centering and response-mean initialization to
    reuse with estimated scales and a fitted intercept. These comparisons
    concern two signal pairs and finite searches, not the full theorem families.
  }
  \label{fig:bridge-controlled}
\end{figure}

\subsubsection{Effects of changing training}

Both orderings persist in all 96 comparisons: SD has lower rare risk for the
dominant rare signal, and U has lower rare risk for the dominant common signal
(\cref{fig:bridge-controlled}). Each gap exceeds nine paired Monte Carlo
standard errors in magnitude, conditional on the calibration choices. A coarser
rate and stopping grid preserves all 96 signs. These results concern the two
tested signal pairs and do not extend the theorem to the changed training
procedures.

The size of the gap nevertheless depends on how training changes. For the
dominant rare signal at $q = 0.005$ and $M = 804$, the calibrated SD-minus-U gap
is $-0.0216$ under fresh training. Reuse changes the gap by $+0.00943$, and
estimating scales adds $+0.00710$, with paired standard errors $0.00093$ and
$0.00045$. Both changes narrow SD's advantage. Fitting an intercept with
population preprocessing instead changes the reuse gap by $-0.0440$ (standard
error $0.00345$), widening that advantage. Thus the preferred scaling can stay
the same while the cost of choosing the other scaling changes substantially.

\subsubsection{Sensitivity to calibration}
\label{sec:calibration-sensitivity}

We added a diagnostic after the original study to assess sensitivity to the
estimated tuning choices. For each prevalence, we randomly partition the 32
saved calibration shards into two groups of 16, each containing 1024
replications. We choose each rate factor and stop on one half and evaluate the
resulting mean rare risk on the other, then reverse the halves' roles. We repeat
this for 100 partitions using seed 20260928, retaining the same candidate family
and tie rule. Saved sums of candidate risks suffice for both selection and
evaluation; no training is repeated. The fresh fixed-intercept arm retains its
exact calibration and exact risk, leaving 84 comparisons with estimated tuning
choices.

Of these 84 comparisons, 83 retain the original ordering in all 200 splits. For
the dominant rare signal with reuse and estimated scales at $q = 0.005$ and
$M = 402$, SD wins in 198 splits and U in two. Its SD-minus-U gaps range from
$-0.01194$ to $+0.000687$, with median $-0.00595$. Across the 84 comparisons, at
least one scaling changes its selected rate or stop relative to full calibration
in 18 to 182 splits. The orderings are therefore more stable than the individual
tuning choices, although a small gap can change sign.

These overlapping splits give a descriptive sensitivity analysis at half the
original calibration size. They combine variation in calibration choices and
held-out risk estimates, and are not independent replications or confidence
intervals. They do not change the original 4096-replication evaluation or its
conditional error bars. The source supplement retains the partitions, choices,
held-out risks, and summary, with instructions for reconstructing and checking
the analysis from the saved calibration runs.

\subsubsection{Adaptive selection and exact evaluation}

Each evaluation replication also supplies an independent validation dataset with
2048 rows. Nested prefixes of sizes 128, 512, and 2048 select candidates by rare
validation half squared loss, using aggregate loss if the rare group is empty.
Ties favor an earlier stop, then a smaller rate factor. Empty rare validation
groups occur in $(306,1,0)$ replications at $q = 0.02$ and $(2206,326,0)$ at
$q = 0.005$, in increasing sample-size order. The validation selectors never
observe population risk.

\begin{figure}[htbp]
  \centering
  \includegraphics[]{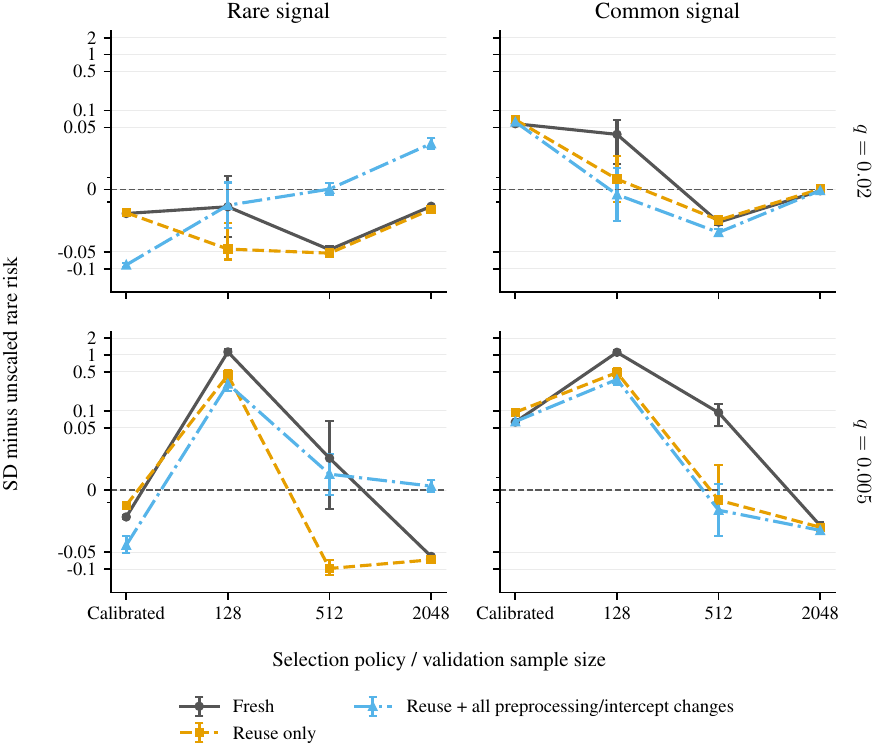}
  \caption{
    Changing selection on the same candidate paths at the largest cap for each
    prevalence. Calibrated choices are fixed before evaluation; the other points
    select by rare validation loss with the indicated sample size. Negative gaps
    favor SD\@. The blue curve includes all changes through arm 8. Bars show two
    paired Monte Carlo standard errors, conditional on the calibration choices.
    All panels share the same vertical scale and limits, with linear spacing
    between $-0.05$ and $0.05$ and logarithmic spacing outside this interval to
    keep both small and large gaps visible. Tick labels give the original risk
    differences. Lines connect discrete selection procedures and do not imply
    monotonic behavior with sample size.
  }
  \label{fig:bridge-validation}
\end{figure}

A further benchmark selects the fitted predictor with the lowest exact
population rare risk in each replication. To distinguish the two objectives, let
$r_R(\widehat f_{D,a})$ denote the population rare excess risk of candidate $a$
trained on random data and sampling choices $D$. For a fixed finite candidate
index set $\mathcal A$, including the allowed rate factors and stops,
\[
  \E_D\!\left[ \min_{a \in \mathcal A}r_R(\widehat f_{D,a}) \right] \leq \min_{a \in \mathcal A}\E_D\!\left[ r_R(\widehat f_{D,a}) \right].
\]

The left side allows adaptation to realized training randomness. The calibrated
procedure approximates the right side, with its choices learned on separate
replications. Neither side here is an unrestricted rate search, and the
reduction from the right side to the left can change the ordering between
scalings. The reversal under exact selection in \cref{tab:bridge-selection}
therefore concerns a different objective from the theorem, even without
validation noise.

\Cref{fig:bridge-validation} shows how validation selection changes the
comparisons across both prevalences and signals. Missing validation observations
create a further problem. For the fresh baseline with $q = 0.005$, $M = 804$,
and a dominant rare signal, validation with 128 rows changes the gap from
$-0.0216$ to $+1.133$ (standard error $0.062$). The rare group is absent in 2206
of 4096 validation sets. Conditional on its absence, the gap is $+2.193$; when
it is present, the gap is $-0.105$. Even within this conservative rate search,
validation can miss the group on which the two scalings differ most.

Evaluation enumerates the four binary cells analytically and excludes
irreducible test noise. For a predictor written in population-centered
coordinates as $a + b_C(I - 1/2) + b_R(J - q)$, its rare excess risk is
\[
  r_R = \frac{(b_C - \beta_C^*)^2}{8}
        + \frac{\{a + (1 - q)(b_R - \beta_R^*)\}^2}{2}.
\]
This includes the fitted intercept and any shift induced by estimated centering.
No prediction clipping is applied. We average paired SD-minus-U differences
across evaluation replications and report their Monte Carlo standard errors.
These errors condition on the frozen calibration choices; they exclude
calibration uncertainty and adjustments for multiple comparisons.

\subsubsection{Exact expected-risk calibration benchmark}
\label{sec:bridge-exact-calibration}

For the two fresh arms, a deterministic moment calculation identifies the best
fixed candidate in the same finite family used for \cref{tab:bridge-selection}.
Let $z$ be the population-centered, transformed feature vector, augmented by a
unit coordinate when fitting an intercept. If $e_k$ is the transformed
coefficient error after $k$ updates and $Q_k = \E[e_k e_k^\top]$, then
independent observations and unit noise variance give
\[
  Q_{k+1} = \E_z[(I - \eta zz^\top)Q_k(I - \eta zz^\top)]
            + \eta^2\E_z[zz^\top], \qquad Q_0
          = \theta^*\theta^{*\top}.
\]
Here $\theta^*$ is the true transformed coefficient vector, including a zero
intercept when fitted, and all coefficients start at zero. The expected rare
risk at stop $k$ is $\tfrac12\operatorname{tr}(Q_k\E[zz^\top\mid J=1])$. We sum
over the four feature cells and evaluate this exact recursion in floating-point
arithmetic at every saved rate and stopping checkpoint. Including the intercept
cross moments is essential. The optimization covers the saved finite family, not
all real rates and integer stops, and leaves the frozen calibration choices
intact.

For the fitted-intercept example in \cref{tab:bridge-selection}, the frozen
choices have expected rare risks $0.24385$ for U and $0.13823$ for SD\@. The
finite-family minima are $0.23184$ and $0.13823$, respectively, attained at stop
$204$ with rate factors $0.407566$ and $0.299562$. Calibration suboptimality is
therefore $0.01200$ for U and zero for SD\@. The optimal fixed-choice SD-minus-U
gap is $-0.09361$, so calibration suboptimality cannot account for the reversal
under selection. In the fixed-intercept baseline, the saved choices already
attain the finite-family minima, $0.15518$ for U and $0.13557$ for SD\@. These
expectations have no Monte Carlo error; the table estimates risks from 4096
evaluation runs. The source supplement retains expected risks for every
candidate, the minimizing choices, and excess risks due to calibration for both
fresh arms, both prevalences, all three caps, and both signals. Independent
short-path enumeration and the baseline's two-coordinate moment recursion check
the calculation. The reproduction guide gives commands to reconstruct it without
training.

\subsubsection{Distributions of selection gains}
\label{sec:bridge-gains}

For each scaling $s$ and evaluation run $D$, define the empirical selection gain
\[
  G_s(D) = r_R(\widehat f_{D,s,\mathrm{cal}})
           - r_R(\widehat f_{D,s,\mathrm{sel}}).
\]
Positive values indicate improvement over that scaling's frozen calibrated
choice. These gains describe selection among a fixed finite family of trained
predictors, conditional on the saved calibration choices. They do not estimate
an unrestricted oracle, and the fitted-intercept arm also departs from the
theorem's training assumptions. Since exact-risk selection includes the
calibrated candidate, its gain is nonnegative in every run. Validation selection
has no corresponding guarantee for population risk.

Even within the finite family $\mathcal A$, a theoretical adaptation gain
compares the best fixed candidate with the best realized candidate:
\[
  A_s = \min_{a \in \mathcal A}\E_D[r_R(\widehat f_{D,s,a})]
        - \E_D[\min_{a \in \mathcal A}r_R(\widehat f_{D,s,a})].
\]
Conditional on the frozen calibration choice $a_{s,\mathrm{cal}}$, exact
selection instead gives
\[
  \E_D[G_s]
  = A_s
    + \left\{ \E_D[r_R(\widehat f_{D,s,a_{s,\mathrm{cal}}})] - \min_{a \in \mathcal A}\E_D[r_R(\widehat f_{D,s,a})] \right\}.
\]
The second term measures calibration suboptimality. It vanishes for the
baseline, which chooses the finite-family optimum using exact expected moments.
For the fitted-intercept example, it is $0.01200$ for U and zero for SD
(\cref{sec:bridge-exact-calibration}). The observed gains therefore combine
adaptation with correction of U's suboptimal calibration choice; their Monte
Carlo standard errors describe evaluation uncertainty conditional on those
choices. The theorem's optimization over deterministic rates and stops does not
cover $A_s$ or these empirical gains.

\begin{table}[htbp]
  \centering
  \begin{tabular}{llrrrr}
\toprule
Selection & Scaling & Median & 5th pct. & 95th pct. & Worse (\%) \\
\midrule
\multicolumn{6}{l}{\emph{Fitted intercept}} \\
Exact risk & U & $0.1456$ & $0.0067$ & $0.7747$ & $0.0$ \\
Exact risk & SD & $0.0626$ & $0.0012$ & $0.4083$ & $0.0$ \\
Validation (2048) & U & $0.1204$ & $-0.0268$ & $0.7455$ & $13.3$ \\
Validation (2048) & SD & $0.0477$ & $-0.0282$ & $0.3915$ & $22.3$ \\
\midrule
\multicolumn{6}{l}{\emph{Fixed zero intercept (baseline)}} \\
Exact risk & U & $0.0913$ & $0.0049$ & $0.4214$ & $0.0$ \\
Exact risk & SD & $0.0679$ & $0.0015$ & $0.4214$ & $0.0$ \\
Validation (2048) & U & $0.0688$ & $-0.0352$ & $0.4021$ & $17.9$ \\
Validation (2048) & SD & $0.0521$ & $-0.0290$ & $0.4106$ & $20.8$ \\
\bottomrule
\end{tabular}

  \caption{
    Selection-gain distributions for the example in \cref{tab:bridge-selection}:
    fresh observations, $q = 0.02$, $M = 204$, and
    $(\beta_C^*,\beta_R^*) = (0.1,1)$. Quantiles use 4096 evaluation runs.
    ``Worse'' is the percentage with negative gain, meaning higher population
    rare risk than the calibrated predictor of the same scaling.
  }
  \label{tab:bridge-gain-distributions}
\end{table}

\begin{figure}[htbp]
  \centering
  \includegraphics[]{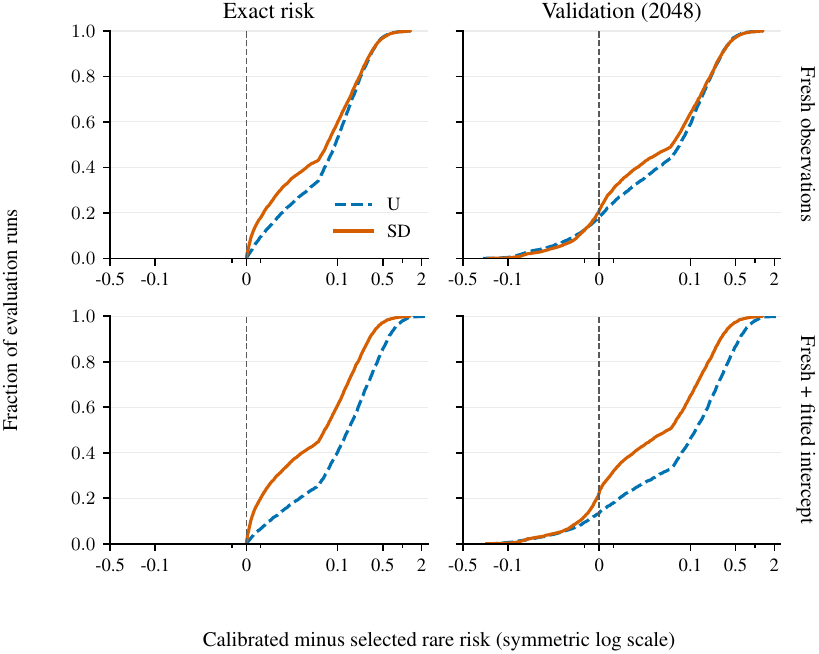}
  \caption{
    Empirical cumulative distributions of selection gains for
    \cref{tab:bridge-gain-distributions}. A curve farther to the right indicates
    larger gains. The horizontal scale is symmetric logarithmic, linear for
    absolute gains at most $0.05$; negative gains mean selection worsens risk.
    Both arms and scalings use the same 4096 evaluation streams.
  }
  \label{fig:bridge-selection-gains}
\end{figure}

The mean gains conceal substantial variation across runs
(\cref{tab:bridge-gain-distributions,fig:bridge-selection-gains}). With a fitted
intercept and exact selection, U's gain has median $0.1456$ and 5th--95th
percentiles $0.0067$--$0.7747$, compared with $0.0626$ and $0.0012$--$0.4083$
for SD. The largest $10\%$ of gains contribute $35.5\%$ of U's total and
$39.3\%$ of SD's total. Thus larger U gains appear in typical runs as well as in
the upper tail. The paired difference $G_U - G_{\mathrm{SD}}$ has mean $0.1210$
(MCSE $0.00378$), median $0.0599$, and 5th--95th percentiles
$-0.1706$--$0.5962$. U gains more in $71.3\%$ of runs, so its advantage in mean
gain does not extend to every run. The per-run identity
\[
  \mathrm{gap}_{\mathrm{sel}}(D) = \mathrm{gap}_{\mathrm{cal}}(D)
                                   + G_U(D)
                                   - G_{\mathrm{SD}}(D)
\]
explains why the mean gain difference overturns the calibrated gap $-0.1046$.
For the fixed-intercept baseline, the mean paired gain difference is only
$0.0133$ (MCSE $0.00086$), with median $0.0131$ and 5th--95th percentiles
$-0.0699$--$0.0915$. U gains more in $68.2\%$ of runs, but this smaller
difference does not overturn the calibrated gap $-0.0192$.

The saved predictors show where the remaining rare risk lies. Let
$e_R = a + (1 - q)(b_R - \beta_R^*)$ denote their signed prediction error
averaged across the two rare cells, and let $e_C = b_C - \beta_C^*$. Under exact
selection with a fitted intercept, the root mean square errors for $e_R$ fall
from $0.6735$ to $0.0255$ for U and from $0.5194$ to $0.1843$ for SD\@. The
corresponding values for $e_C$ fall from $0.3365$ to $0.0775$ for U and from
$0.1082$ to $0.0647$ for SD\@. Since $r_R = e_C^2/8 + e_R^2/2$, the remaining
rare-group mean error dominates SD's selected risk, while U nearly removes it.
In the baseline, selected U and SD predictors instead have root mean square
errors for $e_R$ of $0.1528$ and $0.1269$, and for $e_C$ of $0.1564$ and
$0.0796$. SD retains smaller errors of both kinds and hence lower mean risk.
These comparisons locate the remaining error; the decomposition below separates
the intercept and rare-coefficient contributions.

With 2048 validation rows, fitted-intercept U gains more than SD in $68.0\%$ of
runs, but selection worsens risk relative to calibration in $13.3\%$ of U runs
and $22.3\%$ of SD runs. The baseline also has negative gains, in $17.9\%$ and
$20.8\%$ of runs, respectively. Thus a positive mean improvement does not
guarantee an improvement on each training stream, even when every validation set
contains rare observations. The source supplement retains distributions and
per-run risks, gains, and predictor errors for both fresh arms, both
prevalences, all three caps, both signals, and exact-risk selection and all
three validation sizes. Independent predictions on the two rare cells verify
these results. The reproduction guide explains how to reconstruct the analysis
from saved predictors without training.

\subsubsection{Intercept--rare error cancellation}
\label{sec:bridge-cancellation}

Write each saved predictor in the population-centered coordinates used above.
The true centered intercept is zero, so its error is $a$, not the raw intercept
saved with the coefficients. Let $d = (1 - q)(b_R - \beta_R^*)$. Independence
and the common feature's prevalence of $1/2$ within the rare subgroup give the
per-run identity
\[
  r_R = \underbrace{e_C^2/8}_{\text{common}}
        + \underbrace{a^2/2}_{\text{intercept}}
        + \underbrace{d^2/2}_{\text{rare}}
        + \underbrace{ad}_{\text{cross}}.
\]
The cross term is negative when the intercept and rare-coefficient errors oppose
each other. Define their pooled cancellation fraction as
$-\E[ad]/\E[(a^2+d^2)/2]$, using evaluation means. It equals one for complete
cancellation and is negative when the errors reinforce each other. The two
squared-error terms and their cross term must be considered together: accurate
rare predictions do not require either coordinate to be estimated accurately.

\begin{table}[htbp]
  \centering
  \begin{tabular}{llrrrrr}
\toprule
Selection & Scaling & Common & Intercept & Rare & Cross & Total \\
\midrule
\multicolumn{7}{l}{\emph{Fitted intercept}} \\
Calibrated & U & $0.0142$ & $0.0573$ & $0.1677$ & $0.0018$ & $0.2409$ \\
Calibrated & SD & $0.0015$ & $0.0013$ & $0.1326$ & $0.0009$ & $0.1364$ \\
Exact risk & U & $0.0008$ & $0.2208$ & $0.2235$ & $-0.4439$ & $0.0011$ \\
Exact risk & SD & $0.0005$ & $0.0033$ & $0.0201$ & $-0.0064$ & $0.0175$ \\
Validation (2048) & U & $0.0131$ & $0.2311$ & $0.2362$ & $-0.4536$ & $0.0268$ \\
Validation (2048) & SD & $0.0046$ & $0.0041$ & $0.0331$ & $-0.0074$ & $0.0344$ \\
\midrule
\multicolumn{7}{l}{\emph{Fixed zero intercept (baseline)}} \\
Calibrated & U & $0.0174$ & $0.0000$ & $0.1354$ & $0.0000$ & $0.1528$ \\
Calibrated & SD & $0.0015$ & $0.0000$ & $0.1322$ & $0.0000$ & $0.1337$ \\
Exact risk & U & $0.0031$ & $0.0000$ & $0.0117$ & $0.0000$ & $0.0147$ \\
Exact risk & SD & $0.0008$ & $0.0000$ & $0.0081$ & $0.0000$ & $0.0088$ \\
Validation (2048) & U & $0.0146$ & $0.0000$ & $0.0247$ & $0.0000$ & $0.0393$ \\
Validation (2048) & SD & $0.0046$ & $0.0000$ & $0.0214$ & $0.0000$ & $0.0259$ \\
\bottomrule
\end{tabular}

  \caption{
    Mean rare-risk contributions for the primary example in
    \cref{tab:bridge-selection}, using the same 4096 paired evaluation runs.
    Columns give $e_C^2/8$, $a^2/2$, $d^2/2$, $ad$, and their sum. Rounding can
    obscure exact addition. The baseline's centered intercept and cross terms
    are exactly zero.
  }
  \label{tab:bridge-risk-decomposition}
\end{table}

For fitted-intercept U under exact selection, the squared intercept and
rare-error terms have means $0.2208$ and $0.2235$, while the cross term is
$-0.4439$ (\cref{tab:bridge-risk-decomposition}). Their sum is only $0.000325$.
The pooled cancellation fraction is $99.93\%$, compared with $27.37\%$ for SD\@.
U's errors oppose each other in $99.2\%$ of runs. Its intercept and rare-error
root mean squares actually increase from $0.3384$ and $0.5792$ under calibration
to $0.6645$ and $0.6685$ under exact selection. Calibration has little
cancellation for either scaling. Thus U's lower selected risk reflects closely
matched, opposing errors rather than more accurate intercept and rare
coefficients.

Separating the paired SD-minus-U gap gives a direct accounting test. Under exact
selection, the common, intercept, and rare terms sum to a gap of $-0.42112$
(MCSE $0.00667$); the cross-term gap adds $+0.43754$ (MCSE $0.00687$), producing
the observed $+0.01643$ (MCSE $0.00128$). Relative to calibration, the cross
term changes the gap by $+0.43838$, while the other three contributions together
change it by $-0.31740$. Cancellation therefore accounts for the larger U gain
that overturns SD's calibrated advantage. Removing the cross term is an
accounting diagnostic, not a risk attainable by simply deleting a model
coordinate. If we instead set the centered intercept to zero in these already
selected predictors, their mean gap becomes $-0.20360$. This intervention does
not retrain or reselect them and does not estimate the fixed-intercept arm. In
that separately trained baseline, both intercept contributions vanish and SD
retains smaller common and rare errors under exact selection.

The same decomposition supports the primary validation comparison: with 2048
validation rows, the pooled cancellation fractions are $97.1\%$ for U and
$19.9\%$ for SD\@. The squared-error terms give a gap $-0.43865$, and the cross
term adds $+0.44619$, yielding $+0.00754$. Across both prevalences and all three
caps, exact selection reverses all six fitted-intercept comparisons with the
dominant rare signal. In each, omitting the cross term restores the calibrated
sign. Neither comparisons with a dominant common signal nor any of the 12
baseline comparisons reverse under exact selection. Validation also produces
reversals in the fixed-intercept baseline, where this cancellation is
impossible, so the mechanism does not explain every selection reversal.

These are descriptive identities for the selected predictors in a finite
candidate family, conditional on calibration. They identify how cancellation
accounts for this reversal, without identifying a causal effect of altering
intercept training. \Cref{sec:bridge-exact-calibration} separately quantifies
calibration suboptimality. The source supplement retains per-run errors, four
risk terms, paired gap and gain contributions, and cancellation summaries for
all 120 combinations of settings and selection policies in the two fresh arms.
Independent four-cell predictions and the existing risks and gains verify the
decomposition. The reproduction guide explains how to reconstruct the analysis
without training.

\subsection{Numerical checks}

Independent implementations check transformed-coordinate SGD against
original-coordinate updates, the analytical losses against four-cell
enumeration, and baseline moments against a full tensor recurrence. Replaying
one evaluation replication from every shard checks 73,728 selected endpoints,
with maximum coefficient discrepancy below $1.7 \times 10^{-14}$. All saved
selected predictors pass the cell-enumeration risk check. No calibration or
evaluation candidate fails numerically, and no calibrated choice on the fine
grid lies at a boundary of the positive rate range.

For the fresh fixed-intercept baseline, a denser exact search uses 1025 positive
rate factors and every integer stop. The largest excess of the chosen fine-grid
risk over this denser minimum is $0.000109$ at $q = 0.02$ and $0.000204$ at
$q = 0.005$. Evaluation means differ from their exact expectations by at most
$1.98$ Monte Carlo standard errors. These checks assess the baseline grids and
simulation, not continuous optimality for all arms.

The source supplement contains the protocol, implementation, settings, raw
feature cells and noise, replacement indices, calibration summaries, selected
predictors and their indices, numerical failure records, and numerical checks.
Its machine-readable summary includes aggregate and rare risks, matched changes
between arms, coarse-grid comparisons, selection regret, and coverage
diagnostics. The reproduction guide explains how to summarize and plot the saved
results without training.

\section{Source supplement}
\label{sec:reproducibility}

The source supplement is available at
\url{https://anonymous.4open.science/r/normreg-sgd-1382/}. It contains the paper
source, experiment and plotting code, saved feature draws, results, numerical
checks, and pinned environment definitions. Its \texttt{README.md} gives the
setup, container, and reproduction instructions, including how to build the
complete paper from saved results, rerun experiments, and check the numerical
certificates.

The supplement includes checks of the moment limits, uniform bounds, variational
solver, and saved finite-prevalence selections. These numerical checks
supplement the analytic proofs.

The certificate checker for \cref{thm:conference-predictive} recomputes the
certificates without training models. It checks the six continuous parameter
regions in \cref{sec:predictive-proofs}, including exact rational bounds and
coverage volumes. The supplement retains the recorded output as a summary of the
completed checks, not a substitute for rerunning the checker. The reproduction
guide gives the commands for these checks.

We ran the experiments on a local NixOS workstation with an AMD Ryzen 9 7900 CPU
and 64\,GiB of RAM\@. The Python implementations use NumPy and SciPy on the CPU,
with OpenBLAS restricted to one thread by the experiment tasks. Saved experiment
metadata include software versions and elapsed runtimes.

\end{document}